\documentclass[acmsmall,authorversion]{acmart}

\usepackage{booktabs}
\usepackage{multirow}
\usepackage{nicefrac}
\usepackage{siunitx}
\usepackage{enumitem}
\usepackage{subcaption}
\usepackage{tikz}
\usetikzlibrary{positioning}
\usepackage{graphicx}
\usepackage{algorithm}
\usepackage{algorithmic}
\usepackage{tcolorbox}
\usepackage{mdframed}
\usepackage{cleveref}
\usepackage{amsmath}

\newtheorem{theorem}{Theorem}

\newtheorem{proposition}[theorem]{Proposition}

\theoremstyle{definition}
\newtheorem{definition}{Definition}
\newtheorem{remark}{Remark}
\newtheorem{example}{Example}

\newcommand{\Zn}{\mathbb{Z}_n}
\newcommand{\Sn}{\mathfrak{S}_n}
\newcommand{\BF}{\mathrm{BF}}
\newcommand{\Rot}{\mathcal{R}}
\newcommand{\Bij}{\mathcal{B}}
\newcommand{\Aff}{\mathcal{A}}
\newcommand{\Ident}{\mathcal{I}}
\newcommand{\D}{\mathcal{D}}

\renewcommand\footnotetextcopyrightpermission[1]{}
\setcopyright{none}
\acmConference{}{}{}
\acmYear{}
\acmDOI{}
\acmISBN{}

\title{Bayesian Wind Tunnels for Model Selection}
\subtitle{Do Transformers Select Hypothesis Classes Like Bayesian Scientists?}

\author{Siddhartha R. Dalal}
\affiliation{
  \institution{Columbia University}
  \department{School of Professional Studies and Department of Statistics}
  \city{New York}
  \state{NY}
  \country{USA}
}
\email{sd2803@columbia.edu}

\author{Vishal Misra}
\affiliation{
  \institution{Columbia University}
  \department{Department of Computer Science}
  \city{New York}
  \state{NY}
  \country{USA}
}
\email{vishal.misra@columbia.edu}

\author{Abhay Parekh}
\affiliation{
  \institution{Columbia University}
  \city{New York}
  \state{NY}
  \country{USA}
}
\email{akp2185@columbia.edu}
\authorsaddresses{Authors' emails: sd2803@columbia.edu, vishal.misra@columbia.edu, akp2185@columbia.edu. Authors are listed in alphabetical order.}

\begin{document}

\begin{abstract}
Prior work has shown that transformers can perform exact Bayesian filtering \emph{within} a fixed hypothesis class.
Can they also perform Bayesian \emph{model selection}---identifying the correct hypothesis class from data?
We introduce \emph{model-selection Bayesian wind tunnels}: controlled environments where ground-truth posteriors over hypothesis classes are available in closed form.
Using fixed-point-free involutions---whose defining property $f(f(x)) = x$ is purely relational---a 2.8M-parameter transformer achieves 0.01-bit entropy agreement with the Bayesian optimum (3 seeds), with both integer tokens and opaque symbols whose meanings change every episode.
This extends to \emph{non-nested} comparisons: involutions vs.\ 3-cycles (where neither class is a subset of the other) achieve class posterior MAE under 0.001---demonstrating genuine Bayesian model selection beyond simplicity/subset bias.
We then identify a sharp \emph{perceptual access condition}: when the discriminative statistic requires arithmetic---modular addition (rotations) or multiplication ($f(x) = cx \bmod p$)---model selection succeeds with integer tokens but fails completely with opaque symbols, and this boundary persists under $112\times$ scaling (2.8M to 316M parameters).
A stationarity control confirms the operative factor: opaque tokens with a \emph{fixed} relabeling succeed (0.009-bit MAE), proving that stable semantics---not integer identity---enable circuit compilation.
Header subtask diagnostics localize the failure: a model trained on permutation inversion alone achieves 100\% accuracy, whereas a model trained on the composed operation (header inversion + modular arithmetic) reaches only 18.6\% accuracy---about $3\times$ the 6.25\% baseline but far short of the parsing subtask---localizing the bottleneck to the composition step rather than header parsing.
Probing frontier LLMs on the same tasks shows qualitative Bayesian behavior but a large calibration gap ($\sim$$55\times$)---measured through lossy top-5-logprob and low-sample probes against a full-softmax purpose-trained model, and therefore directional rather than exact.
\end{abstract}

\maketitle

\section{Introduction}
\label{sec:intro}

A striking property of in-context learning in large language models~\citep{brown2020language} is the ability to infer \emph{structure} from sparse evidence.
Given a few input-output pairs consistent with a cyclic rotation, frontier models confidently predict the rotation rather than treating the function as an arbitrary bijection.
This behavior---selecting the simplest consistent hypothesis class before filtering within it---is a hallmark of Bayesian model selection~\citep{mackay1992bayesian}, also known as Occam's razor.

Two broad camps have emerged in understanding this phenomenon.
The \emph{simulator} view holds that transformers trained on sequences from a generative process learn to track posteriors within a fixed hypothesis class, effectively implementing Bayesian filtering as a byproduct of next-token prediction~\citep{aggarwal2025bayesian1,xie2022explanation}.
The \emph{scientist} view holds that transformers go further: they select among hypothesis classes from data, performing the equivalent of Bayesian model comparison~\citep{garg2022can,vonoswald2023transformers}.
The simulator view is now well-supported experimentally---Bayesian wind tunnel (BWT) experiments have demonstrated exact Bayesian filtering within bijections, hidden Markov models, and regression tasks~\citep{aggarwal2025bayesian1}---but the scientist view lacks comparably controlled, quantitative evidence.

The challenge is methodological: testing model selection requires environments where (i) the ground-truth posterior over hypothesis \emph{classes} is available in closed form, (ii) the generative process genuinely selects a class before generating data, and (iii) the experimenter can distinguish model selection from within-class filtering.
Without closed-form ground truth, one cannot measure how far a model's implicit class posterior deviates from the Bayesian optimum; without a genuine two-level generative process, apparent model selection may reduce to pattern matching on surface statistics.
The wind tunnel methodology addresses both requirements by construction: it pairs a tractable generative process with exact Bayesian benchmarks, enabling bit-level precision measurements of model selection behavior.

Previous BWTs test \emph{filtering}---computing $P(\theta \mid M, \D_k)$ where the hypothesis class $M$ is known---but not \emph{model selection}---computing $P(M \mid \D_k)$ where the class itself is uncertain.
Model selection is a qualitatively harder Bayesian computation: the model must simultaneously track posteriors over classes and parameters, marginalizing over the latter to update the former.

In this paper, we extend the BWT methodology to model selection.
We construct generative environments where:
\begin{enumerate}[itemsep=2pt]
  \item The data-generating process first samples a hypothesis class $M$ from a prior $\pi(M)$, then samples a function $f$ from that class.
  \item The Bayesian posterior over the class, $P(M \mid \D_k)$, is available in closed form after $k$ observations.
  \item The Bayesian predictive distribution, marginalizing over both class and function uncertainty, is tractable.
\end{enumerate}

This allows us to ask, with the same bit-level precision as the original BWTs: do transformers perform quantitatively correct Bayesian model selection?

\paragraph{Contributions.}
\begin{itemize}[itemsep=2pt]
  \item We derive closed-form posteriors for model selection over function-class hierarchies on $\Zn$, covering both nested and non-nested comparisons (\Cref{sec:setup}).
  \item We design a \emph{label-invariant} model-selection BWT using fixed-point-free involutions, whose defining property $f(f(x)) = x$ is purely relational, and show that a 2.8M-parameter transformer achieves 0.01-bit entropy agreement with the Bayesian optimum (\Cref{sec:variant3,sec:inv-experiments}).
  \item We demonstrate that model selection extends to \emph{non-nested} comparisons---involutions vs.\ 3-cycles, where neither class is a subset of the other---with class posterior MAE under 0.001, confirming that the capability generalizes beyond simplicity/subset bias to genuine discrimination between disjoint hypothesis classes (\Cref{sec:nonnested}).
  \item We identify a \emph{perceptual access condition}: model selection over arithmetic structures (rotations, scalar multiplication) succeeds with integer tokens but fails with opaque symbols, while relational structures (involutions, 3-cycles) succeed with both---sharply delineating the boundary of gradient-compiled inference (\Cref{sec:rotation-experiments}).
  \item A stationarity control (fixed opaque relabeling, 0.009-bit MAE) proves the operative factor is semantic stationarity, not integer identity; header subtask diagnostics localize the failure to the composition of header parsing with arithmetic, not to header reading itself.
  \item We probe frontier LLMs on the same task and find qualitative Bayesian behavior but a $55\times$ calibration gap relative to purpose-training (\Cref{sec:llm-probing}).
\end{itemize}

\section{Related Work}
\label{sec:related}

\paragraph{In-context learning as implicit Bayesian inference.}
\citet{xie2022explanation} provided the first theoretical framework for viewing in-context learning as implicit Bayesian inference, showing that a transformer trained on mixtures of hidden Markov models learns to infer the latent concept from in-context examples.
Their analysis establishes that the pretraining objective implicitly performs posterior inference over a latent variable selecting the data-generating process.
Our work differs in two respects: (i) we test model selection over \emph{nested} hypothesis classes with different structural complexity---not concept identification within a single mixture---and (ii) we provide closed-form Bayesian benchmarks enabling bit-level precision measurement, rather than asymptotic or qualitative comparisons.

\paragraph{Transformers as meta-learners.}
\citet{garg2022can} demonstrated that transformers trained on function classes (linear functions, decision trees) can in-context learn functions competitive with task-specific algorithms.
\citet{vonoswald2023transformers} showed that linear attention layers implement one step of gradient descent on a least-squares objective and that transformers can meta-learn learning algorithms through their forward pass.
These results establish that transformers can \emph{implement} learning algorithms through their forward pass, but do not test whether the resulting inference is quantitatively Bayesian, nor whether the model selects among competing hypothesis classes.
Our model-selection BWTs directly test this: we measure whether the implicit class posterior tracks the Bayesian posterior with bit-level precision.

\paragraph{Grokking and delayed generalization.}
\citet{power2022grokking} discovered that small models trained on algorithmic tasks can exhibit sudden generalization long after memorization, a phenomenon termed grokking, whose mechanism has since been reverse-engineered for modular arithmetic and group operations~\citep{nanda2023progress,chughtai2023universality}.
Our rotation experiments exhibit similar dynamics: 80K steps of plateau followed by sudden emergence of model selection (characterized here from a single representative run; a multi-seed characterization of the transition is left to future work).
The involution experiments, by contrast, show smooth learning from the start.
We interpret this contrast through the lens of the perceptual access condition: grokking occurs when the model must consolidate a global algebraic circuit (modular arithmetic for rotations), while smooth learning occurs when the relevant computation (relational matching for involutions) aligns with the attention mechanism's inductive bias.

\paragraph{Amortized Bayesian inference and Prior-Data Fitted Networks.}
Prior-Data Fitted Networks~\citep{mueller2022pfn,wang2025pfn} and related amortized Bayesian methods~\citep{reuter2025transformers} train transformers on synthetic data from task priors to learn posterior predictive distributions.
Our approach shares the spirit of learning from synthetic Bayesian tasks, but differs in focus: PFNs target practical prediction tasks and evaluate predictive accuracy, while our wind tunnels target the inference \emph{mechanism} itself and evaluate whether the model's internal posteriors match the Bayesian optimum with bit-level precision.
Recent work on hierarchical in-context learning~\citep{panwar2024hierarchical} and exchangeable sequence modeling as Bayesian inference~\citep{bae2024exchangeable} provides complementary theoretical perspectives on how transformers encode posteriors over latent environments; our contribution is an empirical testbed where these posteriors are exactly computable.

\paragraph{Expressivity and circuit complexity of transformers.}
A parallel line of work characterizes what bounded-depth transformers can compute: log-precision transformers lie within $\mathrm{TC}^0$~\citep{merrill2023parallelism}, they tend to learn ``shortcuts'' to automata rather than emulating them step by step~\citep{liu2023shortcuts}, and their attention computations can be expressed in the RASP programming model~\citep{weiss2021rasp}.
Our polynomial barrier (\Cref{sec:polynomial}) is an empirical, learnability-side counterpart to these expressivity results: we observe that gradient descent fails to compile the Vandermonde/interpolation circuit for degree-$d$ detection even when capacity is ample.
We connect the two only informally---establishing a formal depth bound for the discriminative statistic is beyond our scope---but the correspondence suggests why the boundary tracks arithmetic depth.

\paragraph{Bayesian wind tunnels.}
The BWT methodology was introduced by \citet{aggarwal2025bayesian1}, who demonstrated exact Bayesian filtering in bijection, HMM, and regression tasks.
Subsequent work analyzed the gradient dynamics underlying this behavior~\citep{aggarwal2025gradient2} and extended the approach to production-scale LLMs~\citep{aggarwal2025geometric3}.
All prior BWTs test \emph{within-class} inference: the hypothesis class is fixed and the model tracks posteriors over parameters.
We extend the methodology to \emph{between-class} inference, testing whether transformers perform Bayesian model selection---a qualitatively harder computation requiring marginalization over parameters to update class posteriors.

\section{Model-Selection Wind Tunnel: Setup}
\label{sec:setup}

\subsection{Symbol Space and Function Classes}
\label{sec:classes}

Let $\Zn = \{0, 1, \ldots, n-1\}$ denote the cyclic group of integers modulo $n$.
We define a nested hierarchy of bijection classes on $\Zn$:

\begin{definition}[Function-Class Hierarchy]
\label{def:hierarchy}
\begin{align}
  \Ident &= \{ \mathrm{id} \} && |\Ident| = 1 \\
  \Rot &= \{ f_c : x \mapsto x + c \bmod n \mid c \in \Zn \} && |\Rot| = n \\
  \Bij &= \{ \text{all bijections on } \Zn \} && |\Bij| = n!
\end{align}
with the strict nesting $\Ident \subset \Rot \subset \Bij$.
\end{definition}

\begin{remark}
For $n$ prime, we can insert the affine class $\Aff = \{ x \mapsto ax + b \bmod n \mid a \in \Zn^*, b \in \Zn \}$ with $|\Aff| = n(n-1)$ between $\Rot$ and $\Bij$.
We focus on $\Rot$ vs.\ $\Bij$ in this paper for clarity, and treat the full hierarchy as an extension.
\end{remark}

\subsection{Generative Process}
\label{sec:generative}

The data-generating process for each sequence is:
\begin{enumerate}[itemsep=2pt]
  \item \textbf{Class selection.} Draw $M \in \{\Rot, \Bij\}$ from prior $\pi$:
    \begin{equation}
      P(M = \Rot) = \pi, \quad P(M = \Bij) = 1 - \pi.
    \end{equation}
  \item \textbf{Function selection.} Draw $f$ uniformly from $M$:
    \begin{equation}
      f \sim \mathrm{Uniform}(M).
    \end{equation}
  \item \textbf{Observation sequence.} Choose distinct inputs $x_1, \ldots, x_K \in \Zn$ (without replacement) and present the pairs $(x_1, f(x_1)), (x_2, f(x_2)), \ldots$
  \item \textbf{Prediction query.} At position $k+1$, present input $x_{k+1}$ and require the model to produce a distribution over $f(x_{k+1})$.
\end{enumerate}

We denote the observed data after $k$ pairs as $\D_k = \{(x_1, y_1), \ldots, (x_k, y_k)\}$ where $y_i = f(x_i)$.

\subsection{Posterior over Classes}
\label{sec:class-posterior}

\begin{proposition}[Class Posterior]
\label{prop:class-posterior}
Let $\D_k$ be a set of $k$ input-output pairs with distinct inputs and distinct outputs (as required by any bijection).
Define the indicator
\begin{equation}
  \mathbb{1}_\Rot(\D_k) =
  \begin{cases}
    1 & \text{if there exists } c \in \Zn \text{ such that } y_i = x_i + c \bmod n \text{ for all } i, \\
    0 & \text{otherwise.}
  \end{cases}
\end{equation}

Then:
\begin{equation}
\label{eq:class-posterior}
  P(\Rot \mid \D_k) =
  \begin{cases}
    \displaystyle\frac{\pi \cdot \frac{1}{n}}{\pi \cdot \frac{1}{n} + (1-\pi) \cdot \frac{(n-k)!}{n!}} & \text{if } \mathbb{1}_\Rot(\D_k) = 1, \\[12pt]
    0 & \text{if } \mathbb{1}_\Rot(\D_k) = 0.
  \end{cases}
\end{equation}
\end{proposition}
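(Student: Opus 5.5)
We apply Bayes' rule to the two-class model, $P(M \mid \D_k) \propto \pi(M)\, P(\D_k \mid M)$. Everything then reduces to computing the marginal likelihood $P(\D_k \mid M)$ for $M \in \{\Rot,\Bij\}$. One point needs fixing first. The inputs $x_1,\ldots,x_k$ are chosen by a mechanism that does not depend on $M$ or $f$: distinct, without replacement. So their probability is a common factor in both marginal likelihoods and cancels in the posterior. We therefore condition on the inputs throughout and treat $P(\D_k \mid M)$ as the probability that $f(x_i)=y_i$ for all $i$ when $f \sim \mathrm{Uniform}(M)$.

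\textbf{Bijection class.} Under $M=\Bij$, $P(\D_k \mid \Bij)$ equals the number of bijections consistent with $\D_k$, divided by $n!$. The outputs $y_i$ are distinct, so a consistent bijection is determined by an arbitrary bijection from the $n-k$ remaining inputs onto the $n-k$ remaining outputs. There are $(n-k)!$ of these, giving $P(\D_k\mid\Bij) = (n-k)!/n!$. Note that this is positive for every admissible $\D_k$.

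\textbf{Rotation class.} Each $f_c \in \Rot$ has prior mass $1/n$, and the $f_c$ are distinct functions for distinct $c$. For $k \ge 1$, $f_c$ is consistent with $\D_k$ iff $y_i - x_i \equiv c$ for all $i$. This determines $c = y_1 - x_1$ uniquely, so at most one rotation is consistent. Hence $P(\D_k\mid\Rot) = \tfrac1n \mathbb{1}_\Rot(\D_k)$. This uniqueness step is the only place where care is needed. It is what makes the numerator $\pi/n$ rather than something depending on $k$. For $k=0$ the stated formula degenerates: every $f_c$ is consistent, and the expression does not literally reduce to $\pi$. I would therefore state the result for $k\ge1$, where it is correct as written.

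Substituting into Bayes' rule, the case $\mathbb{1}_\Rot(\D_k)=1$ gives $\frac{\pi/n}{\pi/n + (1-\pi)(n-k)!/n!}$. The case $\mathbb{1}_\Rot(\D_k)=0$ gives numerator $0$ over a denominator that is strictly positive (for $\pi<1$), hence posterior $0$. This is exactly \eqref{eq:class-posterior}.
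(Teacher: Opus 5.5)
Your proof is correct and follows the paper's route: compute $P(\D_k \mid \Rot) = \tfrac{1}{n}\mathbb{1}_\Rot(\D_k)$ using uniqueness of the shift $c = y_1 - x_1$, compute $P(\D_k \mid \Bij) = (n-k)!/n!$ by counting consistent bijections, and apply Bayes' rule. Your extra caveats are valid refinements of points the paper leaves implicit: the input-selection factor cancels, the formula requires $k \ge 1$ (at $k=0$ it does not return the prior $\pi$), and the zero case needs $\pi < 1$.
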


\begin{proof}
We compute the marginal likelihood under each class.

\textbf{Under $\Rot$:} The class contains $n$ functions (one per shift $c$).
The first observation $(x_1, y_1)$ determines $c = y_1 - x_1 \bmod n$ uniquely.
If subsequent observations are consistent with this $c$, all have likelihood 1 under the determined rotation; otherwise the likelihood is 0.
Hence:
\begin{equation}
\label{eq:lik-rot}
  P(\D_k \mid \Rot) = \frac{1}{n} \cdot \mathbb{1}_\Rot(\D_k).
\end{equation}

\textbf{Under $\Bij$:} The class contains $n!$ bijections.
The first observation is consistent with $(n-1)!$ bijections (any that map $x_1 \mapsto y_1$), the second with $(n-2)!$, and so on.
The number consistent with all $k$ observations is $(n-k)!$.
Under a uniform prior over bijections:
\begin{equation}
\label{eq:lik-bij}
  P(\D_k \mid \Bij) = \frac{(n-k)!}{n!} = \frac{1}{n(n-1)\cdots(n-k+1)} = \frac{1}{\binom{n}{k} \cdot k!}.
\end{equation}

Applying Bayes' rule:
\begin{equation}
  P(\Rot \mid \D_k) = \frac{\pi \cdot P(\D_k \mid \Rot)}{\pi \cdot P(\D_k \mid \Rot) + (1 - \pi) \cdot P(\D_k \mid \Bij)}.
\end{equation}
Substituting \eqref{eq:lik-rot} and \eqref{eq:lik-bij} yields \eqref{eq:class-posterior}.
\end{proof}

\begin{definition}[Bayes Factor]
\label{def:bayes-factor}
The Bayes factor in favor of $\Rot$ over $\Bij$ after $k$ observations consistent with a rotation is:
\begin{equation}
\label{eq:bayes-factor}
  \BF(\Rot : \Bij \mid \D_k) = \frac{P(\D_k \mid \Rot)}{P(\D_k \mid \Bij)} = \frac{(n-1)!}{(n-k)!} = \prod_{j=1}^{k-1}(n-j).
\end{equation}
\end{definition}

\begin{remark}[Canonical Occam setting]
\label{rem:occam}
The rotation-vs-bijection setup is a canonical instance of Bayesian model comparison~\citep{mackay1992bayesian,kass1995bayes}: a small, structured class ($|\Rot| = n$) is nested inside a large, unstructured class ($|\Bij| = n!$).
The Occam factor---the ratio of prior predictive likelihoods---automatically penalizes the more complex class.
Intuitively, a uniform prior over $n!$ bijections assigns each function probability $1/n!$, while a uniform prior over $n$ rotations assigns each $1/n$.
When the data is consistent with a rotation, the structured class ``predicted'' this particular function with probability $1/n$ while the unstructured class assigned it only $1/n!$---the likelihood ratio is $(n-1)!/1$, the Occam factor.
This is not a toy artifact: the same mechanism operates whenever a parametric family is compared against a nonparametric alternative.
\end{remark}

\begin{example}[$n = 16$]
\label{ex:n16}
With equal priors ($\pi = 1/2$), the posterior probability of the rotation class after $k$ observations consistent with a rotation is:

\medskip
\begin{center}
\begin{tabular}{cccc}
\toprule
$k$ & $\BF(\Rot:\Bij)$ & $P(\Rot \mid \D_k)$ & bits of evidence \\
\midrule
1 & 1 & 0.500 & 0.0 \\
2 & 15 & 0.938 & 3.9 \\
3 & 210 & 0.995 & 7.7 \\
4 & 2{,}730 & 0.9996 & 11.4 \\
5 & 32{,}760 & 0.99997 & 15.0 \\
\bottomrule
\end{tabular}
\end{center}

\medskip
\noindent
Here ``bits of evidence'' is $\log_2 \BF$.
After just 3 rotation-consistent observations, the Bayesian reasoner has accumulated 7.7 bits of evidence for the rotation hypothesis.
The Occam factor does enormous work: each observation consistent with a rotation is $\sim$15$\times$ more likely under $\Rot$ than $\Bij$, because the rotation class concentrates its probability mass on $n = 16$ functions while the bijection class spreads it over $16! \approx 2 \times 10^{13}$.
This super-exponential growth of the Bayes factor---$(n-1)!/(n-k)!$---is the quantitative signature of Occam's razor penalizing unnecessary complexity.
\end{example}

\subsection{Predictive Distribution}
\label{sec:predictive}

\begin{proposition}[Bayesian Predictive]
\label{prop:predictive}
Given data $\D_k$ consistent with a rotation $c = y_1 - x_1 \bmod n$, the Bayesian predictive distribution for a new input $x_{k+1} \notin \{x_1, \ldots, x_k\}$ is:
\begin{equation}
\label{eq:predictive}
  P(f(x_{k+1}) = y \mid \D_k) =
  \begin{cases}
    P(\Rot \mid \D_k) + \dfrac{P(\Bij \mid \D_k)}{n - k} & \text{if } y = x_{k+1} + c \bmod n \text{ and } y \notin \{y_1, \ldots, y_k\}, \\[10pt]
    \dfrac{P(\Bij \mid \D_k)}{n - k} & \text{if } y \neq x_{k+1} + c \bmod n \text{ and } y \notin \{y_1, \ldots, y_k\}, \\[10pt]
    0 & \text{if } y \in \{y_1, \ldots, y_k\}.
  \end{cases}
\end{equation}
\end{proposition}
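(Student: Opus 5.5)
Since the class $M$ is a latent variable, the plan is to condition on it and apply the law of total probability:
\begin{equation*}
P(f(x_{k+1}) = y \mid \D_k) = P(\Rot \mid \D_k)\, P(f(x_{k+1}) = y \mid \Rot, \D_k) + P(\Bij \mid \D_k)\, P(f(x_{k+1}) = y \mid \Bij, \D_k).
\end{equation*}
The class weights are supplied by \Cref{prop:class-posterior}, so only the two within-class predictives need to be computed. In each class, conditioning on $\D_k$ restricts the uniform prior on $M$ to the uniform distribution on the functions in $M$ consistent with $\D_k$. Each within-class predictive is then a ratio of counts.

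\textbf{Rotation term.} As in the proof of \Cref{prop:class-posterior}, the first pair fixes $c = y_1 - x_1 \bmod n$. Since $\D_k$ is rotation-consistent, exactly one rotation survives. Hence $P(f(x_{k+1}) = y \mid \Rot, \D_k)$ is a point mass at $x_{k+1} + c \bmod n$.

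\textbf{Bijection term.} The consistent bijections are those extending the partial injection $x_i \mapsto y_i$, and there are $(n-k)!$ of them. Among these, the ones with $x_{k+1} \mapsto y$ number $(n-k-1)!$ if $y \notin \{y_1,\dots,y_k\}$, and $0$ otherwise. So the within-class predictive is uniform, equal to $1/(n-k)$, on the $n-k$ unused outputs. This uses $x_{k+1} \notin \{x_1,\dots,x_k\}$, which the statement assumes.

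Adding the weighted terms then gives the three cases. There is one subtlety. The rotation target $x_{k+1} + c$ can never lie in $\{y_1,\dots,y_k\}$: the map $x \mapsto x+c$ is injective and $x_{k+1}$ is a fresh input. Therefore the case ``$y = x_{k+1}+c$ and $y \in \{y_i\}$'' is vacuous, and the third case correctly receives zero mass from both terms. I would state this explicitly, since the case split in \eqref{eq:predictive} tacitly relies on it.

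As a final sanity check, the masses should sum to one: $P(\Rot\mid\D_k) + (n-k)\cdot P(\Bij\mid\D_k)/(n-k) = 1$. I expect no real obstacle. The only step needing care is the count $(n-k-1)!/(n-k)!$ together with the vacuous case noted above.
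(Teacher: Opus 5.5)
Your proposal is correct and follows the paper's proof essentially step for step. Both arguments use total probability over the two classes, a point mass at $x_{k+1}+c \bmod n$ under $\Rot$, and the uniform $1/(n-k)$ predictive over unused outputs under $\Bij$; your explicit count of $(n-k-1)!$ completions per unused output is a slightly sharper version of the paper's symmetry argument, and your remark that the rotation target is never a used output is the same observation the paper makes.
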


\begin{proof}
By the law of total probability:
\begin{equation}
  P(f(x_{k+1}) = y \mid \D_k) = \sum_{M \in \{\Rot, \Bij\}} P(f(x_{k+1}) = y \mid M, \D_k) \cdot P(M \mid \D_k).
\end{equation}

Under $\Rot$, given $\D_k$: the rotation $c$ is determined, so $P(f(x_{k+1}) = y \mid \Rot, \D_k) = \mathbb{1}[y = x_{k+1} + c \bmod n]$.

Under $\Bij$, given $\D_k$: there are $(n-k)!$ consistent bijections, each mapping $x_{k+1}$ to a distinct element of $\Zn \setminus \{y_1, \ldots, y_k\}$ (the $n-k$ unused outputs).
By symmetry, each unused output is equally likely:
\begin{equation}
  P(f(x_{k+1}) = y \mid \Bij, \D_k) =
  \begin{cases}
    \frac{1}{n-k} & \text{if } y \notin \{y_1, \ldots, y_k\}, \\
    0 & \text{if } y \in \{y_1, \ldots, y_k\}.
  \end{cases}
\end{equation}

Note that when $\D_k$ is consistent with rotation $c$, the rotation-predicted output $y^* = x_{k+1} + c \bmod n$ is necessarily in $\Zn \setminus \{y_1, \ldots, y_k\}$ (since $f_c$ is a bijection and $x_{k+1} \notin \{x_1,\ldots,x_k\}$).
Combining the two components yields \eqref{eq:predictive}.
\end{proof}

\subsection{Predictive Entropy}
\label{sec:entropy}

\begin{proposition}[Predictive Entropy under Model Selection]
\label{prop:entropy}
Let $p_\Rot = P(\Rot \mid \D_k)$, $p_\Bij = 1 - p_\Rot$, and $m = n - k$ (the number of unused outputs).
The predictive entropy for $f(x_{k+1})$ given rotation-consistent data is:
\begin{equation}
\label{eq:entropy}
  H(f(x_{k+1}) \mid \D_k) = -\left(p_\Rot + \frac{p_\Bij}{m}\right) \log_2\!\left(p_\Rot + \frac{p_\Bij}{m}\right) - (m-1)\frac{p_\Bij}{m} \log_2 \frac{p_\Bij}{m}.
\end{equation}
\end{proposition}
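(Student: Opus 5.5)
The plan is to read the entropy directly off the predictive distribution of \Cref{prop:predictive}, using the definition $H = -\sum_y P(y)\log_2 P(y)$ with the convention $0\log_2 0 = 0$. Fix rotation-consistent data $\D_k$ with shift $c$, and let $y^* = x_{k+1} + c \bmod n$. Partition the output alphabet $\Zn$ into three sets: the singleton $\{y^*\}$; the set $U$ of the remaining unused outputs, $\Zn \setminus (\{y_1,\dots,y_k\}\cup\{y^*\})$; and the set of used outputs $\{y_1,\dots,y_k\}$.

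\textbf{Step 1 (counting).} First I will check that the partition is as claimed, which is the only point that needs care. The proof of \Cref{prop:predictive} already notes that $y^*$ is unused, since $f_c$ is injective and $x_{k+1}$ is a fresh input. The outputs $y_1,\dots,y_k$ are distinct, so there are exactly $m = n-k$ unused outputs. Removing $y^*$ from them gives $|U| = m-1$.

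\textbf{Step 2 (probabilities on each block).} By \Cref{prop:predictive}:
\begin{itemize}
  \item $y^*$ receives probability $p_\Rot + p_\Bij/m$;
  \item each $y \in U$ receives probability $p_\Bij/m$;
  \item each used output receives probability $0$, so it contributes nothing to the sum.
\end{itemize}
As a sanity check, these masses sum to $p_\Rot + m\cdot p_\Bij/m = 1$.

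\textbf{Step 3 (summing).} The entropy sum now splits into one term for $y^*$ and $m-1$ identical terms for the elements of $U$. This gives exactly \eqref{eq:entropy}.

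\textbf{Edge cases.} When $p_\Bij = 0$, the second term is $0\log_2 0 = 0$ by convention, so the formula still holds. When $m = 1$, the second term vanishes because $m-1 = 0$.

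I expect no real obstacle. The only delicate point is the bookkeeping in Step 1, namely confirming that $y^*$ lies among the unused outputs. That fact is already established in the proof of \Cref{prop:predictive}, so I will simply invoke it rather than reprove it.
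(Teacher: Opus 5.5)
Your proposal is correct and takes the same approach as the paper. Both read the entropy directly off the atom structure of \Cref{prop:predictive}: one atom $y^*$ with mass $p_\Rot + p_\Bij/m$, $m-1$ atoms of mass $p_\Bij/m$, and zero mass elsewhere; the check that $y^*$ is unused and the $0\log_2 0$ edge cases are harmless additions that the paper leaves implicit.
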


\begin{proof}
The predictive distribution has two types of atoms: one output $y^*$ with mass $p_\Rot + p_\Bij/m$, and $m - 1$ other unused outputs each with mass $p_\Bij/m$.
The entropy is the negative sum of $p \log_2 p$ over these $m$ atoms.
\end{proof}

\begin{remark}[Entropy trajectory]
\label{rem:entropy-trajectory}
The entropy trajectory under model selection differs qualitatively from the within-class BWTs:
\begin{itemize}[itemsep=2pt]
  \item \textbf{Pure bijection BWT} \citep{aggarwal2025bayesian1}: $H_{\mathrm{Bayes}}(k) = \log_2(n - k)$, a monotone staircase from $\log_2 n$ bits to zero (each step is $\log_2\!\tfrac{n-k}{n-k-1}$, small for early $k$).
  \item \textbf{Model-selection BWT}: entropy drops \emph{much faster} because model selection concentrates mass on the rotation prediction.
  At $k = 3$ with $n = 16$ and equal priors, $H \approx 0.056$ bits, compared to $\log_2(13) \approx 3.7$ bits for pure-bijection filtering.
\end{itemize}
This ${\approx}66\times$ entropy reduction is the quantitative signature of Occam's razor.
\end{remark}

\section{Motivating Experiment: Rotations}
\label{sec:rotation-experiments}

We first test model selection using the rotation-vs-bijection setup from \Cref{sec:setup}.
A 2.8M-parameter transformer (6 layers, 6 heads, $d_{\mathrm{model}} = 192$, $d_{\mathrm{ff}} = 768$) is trained on sequences from the generative process of \Cref{sec:generative} with $n = 16$ and $\pi = 0.5$.
Each training sequence presents $n - 1 = 15$ input-output pairs $[x_1, f(x_1), \textsc{sep}, \ldots]$, with cross-entropy loss at each output position.
Batches of 64 sequences are generated on-the-fly using AdamW ($\mathrm{lr} = 3 \times 10^{-4}$, cosine schedule, 150K steps).

\paragraph{Evaluation.}
At each position $k$, we compare the model's predictive entropy against the Bayesian optimum from \Cref{prop:entropy}, and extract the model's implicit class posterior:
\begin{equation}
\label{eq:implicit-class-posterior}
  \hat{P}(\Rot \mid \D_k) = \frac{p_{\mathrm{model}}(y^*) - \frac{1}{n-k}}{1 - \frac{1}{n-k}}
\end{equation}
where $y^* = x_{k+1} + c \bmod n$ is the rotation-predicted output.
We report mean absolute entropy error (MAE, in bits) and class posterior MAE.
Note that entropy MAE is a distribution-level metric (it depends on the full predictive distribution, not just the top token), providing a stronger test than top-token accuracy alone.

\paragraph{Key finding: the perceptual access condition.}
We define the \emph{perceptual access condition} as follows: a structured hypothesis class $M$ satisfies the perceptual access condition under a given token encoding if the defining property of $M$ can be evaluated from token identities alone, without requiring arithmetic or ordering information beyond what the encoding provides. 

A truly abstract Bayesian reasoner would behave identically under
per-episode relabelings of the tokens. To see that
Bayesian model selection is equivariant under relabeling of the symbol set,
let $\sigma:\Zn\to\Zn$ be any bijection and define the conjugate function
$f^{\sigma}=\sigma\circ f\circ\sigma^{-1}$.
If a dataset $D_k=\{(x_i,y_i)\}_{i=1}^k$ is relabeled to
$D_k^{\sigma}=\{(\sigma(x_i),\sigma(y_i))\}_{i=1}^k$,
then the Bayesian posterior transforms equivariantly:
\[
P(\mathcal{M}\mid D_k)=P(\mathcal{M}^{\sigma}\mid D_k^{\sigma}),
\]
where $\mathcal{M}^{\sigma}=\{\, f^{\sigma} : f \in \mathcal{M} \,\}$ is the relabeled hypothesis class.

Failures under opaque relabeling therefore diagnose representational
limitations of the learned inference procedure rather than a property
of Bayesian inference itself.

We tested two token encodings: (i) \emph{opaque header}, where symbols are randomly relabeled per sequence with a header encoding the cyclic ordering (as in~\citep{aggarwal2025bayesian1}), and (ii) \emph{integer tokens}, where the tokens $0, 1, \ldots, 15$ are used directly.
With opaque headers, the model completely failed to learn model selection (MAE $= 1.28$ bits, class posterior MAE $= 0.92$).
With integer tokens, model selection succeeded: MAE $= 0.12$ bits, class posterior MAE $= 0.04$.
The model's entropy converged to within 0.02 bits of the Bayesian optimum by $k = 5$, and its implicit class posterior tracked the Bayesian posterior across all positions.

The failure is not due to model capacity or training---the header model trains for the same number of steps and successfully learns bijection elimination (the $\log_2(n-k)$ staircase).
The bottleneck is perceptual: detecting that $y = x + c \bmod n$ requires arithmetic over token identities, which gradient descent does not learn to perform when tokens are arbitrarily relabeled per episode.

\paragraph{Scaling stress test.}
To rule out insufficient capacity, we scaled the opaque rotation experiment from 2.8M to 316M parameters---a $112\times$ increase---under identical data, optimization, and curriculum (\Cref{fig:scaling-wall}).
With integer tokens, scaling improves precision: the 25M model achieves 0.008-bit entropy MAE and the 152M model achieves 0.005 bits.
With opaque tokens, scaling changes nothing: the 25M, 152M, and 316M models converge to 1.20, 1.19, and 1.17 bits respectively---indistinguishable from the 2.8M model's 1.28 bits.
A cross-evaluation further confirms the diagnosis: the integer-trained 25M model (0.008-bit MAE on integers) fails completely on opaque tokens (1.42-bit MAE), despite having both the arithmetic circuit in its weights and the permutation information in the context.
The failure is specifically one of dynamic semantic rebinding---connecting a per-episode permutation to pre-compiled arithmetic---not of missing computation.

\paragraph{Extended training.}
To rule out late grokking, we trained the 25M-parameter model for $500{,}000$ steps ($3.3\times$ the standard run) under both conditions.
The opaque model's entropy MAE remains flat at $1.28$ bits from step $10{,}000$ onward, showing no sign of improvement through the entire extended run.
The integer model continues improving: MAE decreases from $0.167$ at step $200{,}000$ to $0.097$ at step $436{,}000$, demonstrating that additional training helps when the perceptual access condition is met but has no effect when it is not.
The boundary is not a matter of insufficient training time.

\paragraph{Attention diagnostics.}
Analyzing attention patterns across all layers and heads of the 25M-parameter opaque model over 500 episodes (\Cref{fig:attention-heatmap}) reveals that body tokens' mean attention to header positions (0.358) is indistinguishable from the uniform-attention baseline (0.360).
Early layers show slightly elevated header attention (layer 0, head 1: 0.78), but deeper layers---where task-relevant computation would occur---revert to baseline.
The model does not actively ignore the header; it extracts no actionable information from it.
A header ablation confirms this causally: randomizing the header (providing the wrong permutation) or removing it entirely leaves performance unchanged (MAE $1.55$ bits in all three conditions), establishing that gradient descent never compiled a header-reading circuit for the model-selection task.

The failure is not one of header reading per se.
A transformer trained on a permutation-inversion subtask---given the header and an opaque token, predict the corresponding integer---achieves $100\%$ accuracy within $1{,}500$ gradient steps, demonstrating that gradient descent can compile header parsing.
However, a separate model trained on the composed operation---given the header and an opaque input $x$, predict the opaque token for $(x + c) \bmod n$---reaches only $18.6\%$ accuracy after $150{,}000$ steps, about $3\times$ the $6.25\%$ baseline but far short of the $100\%$ achieved on parsing alone.
The bottleneck is thus not reading the permutation but routing it into arithmetic: the circuit-assembly step that would connect header inversion to modular addition is where gradient descent falls well short, even though it compiles each component in isolation.

\begin{figure}[t]
\centering
\includegraphics[width=\columnwidth]{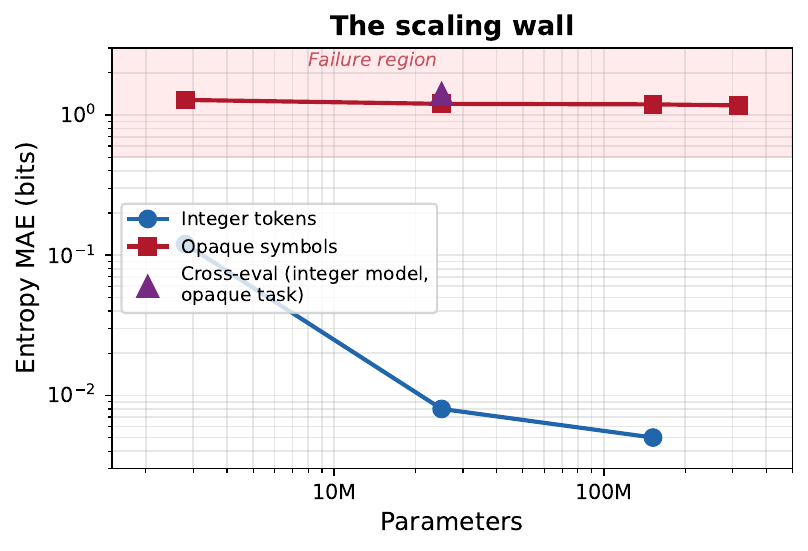}
\caption{The scaling boundary. Entropy MAE vs.\ parameter count for integer tokens (blue) and opaque symbols (red) on the rotation model-selection task.
Integer precision improves with scale ($0.12 \to 0.005$ bits); opaque error is scale-invariant ($1.28 \to 1.17$ bits).
The purple triangle shows cross-evaluation: an integer-trained model tested on opaque tokens fails ($1.42$ bits) despite possessing the arithmetic circuit.}
\label{fig:scaling-wall}
\end{figure}

\begin{figure}[t]
\centering
\includegraphics[width=\columnwidth]{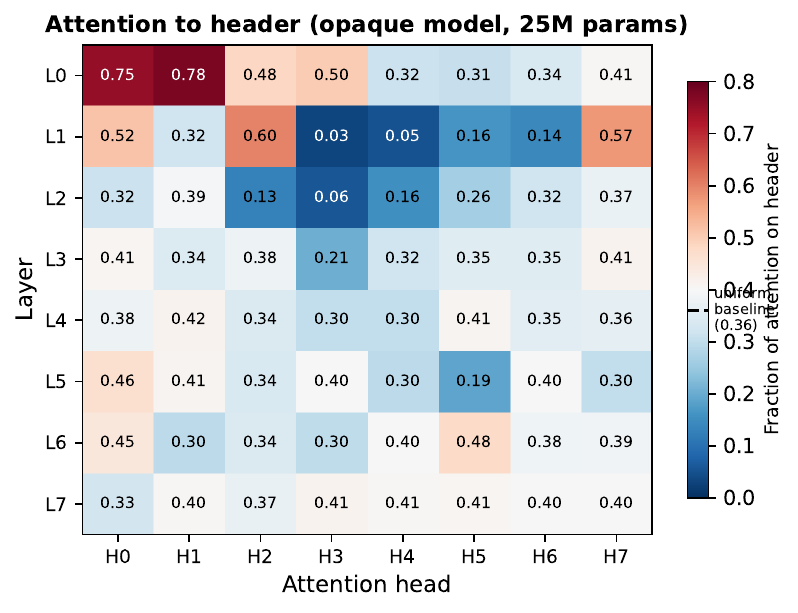}
\caption{Attention to header positions in the 25M-parameter opaque-trained model, averaged over 500 episodes.
Each cell shows the fraction of attention mass that body tokens place on header positions.
The dashed line on the colorbar marks the uniform-attention baseline (0.36).
Layers 2--7 are indistinguishable from uniform; the model cannot extract useful information from the per-episode permutation.}
\label{fig:attention-heatmap}
\end{figure}

\paragraph{Varying group size.}
To confirm that the failure is not specific to $n = 16$, we trained opaque rotation models for $n \in \{8, 32, 64\}$ (\Cref{fig:vary-n}).
All group sizes fail: entropy MAE is $0.63 \pm 0.01$ bits ($n = 8$), $1.84 \pm 0.04$ bits ($n = 32$), and $2.30 \pm 0.02$ bits ($n = 64$), with class-posterior MAE exceeding $0.69$ in every case.
The MAE increases with $n$ (as expected, since the task grows harder with group size), but remains far below the uniform baseline of $\log_2 n$ bits, indicating that the model learns basic bijection elimination but never engages model selection.
The boundary persists across group sizes.

\begin{figure}[t]
\centering
\includegraphics[width=\columnwidth]{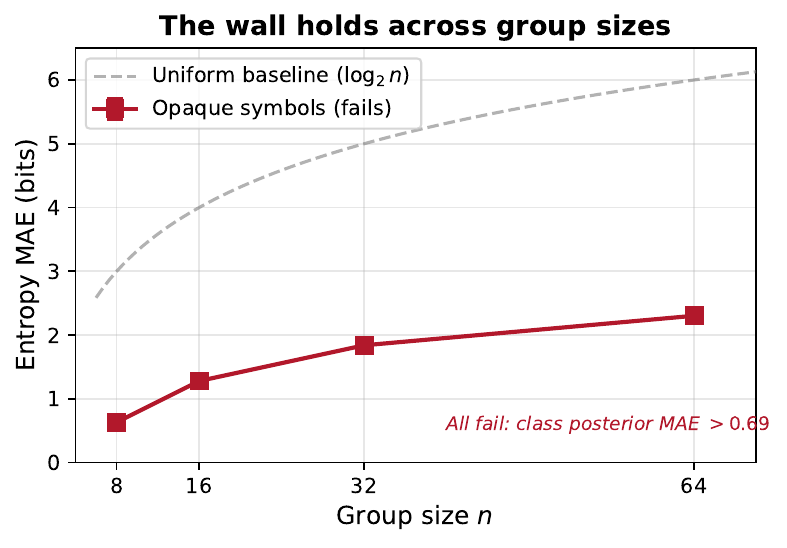}
\caption{Entropy MAE vs.\ group size $n$ for opaque rotation model selection (3 seeds, mean $\pm$ std).
The boundary persists across all group sizes tested.
MAE rises with $n$ but remains below the uniform baseline ($\log_2 n$, dashed), indicating the model learns bijection elimination but not model selection.}
\label{fig:vary-n}
\end{figure}

\paragraph{Stationarity control.}
To isolate stationarity as the operative factor---rather than integer identity per se---we trained a model with opaque tokens under a \emph{fixed} permutation held constant across all episodes.
The tokens are arbitrary labels (not integers), but the same label always denotes the same value.
This model achieves $0.009$-bit entropy MAE and class-posterior MAE of $0.002$ (3 seeds, $0.009 \pm 0.0005$ bits), matching the involution result and surpassing the integer condition ($0.12$ bits) by an order of magnitude.
The operative factor is stationarity of the token-to-meaning mapping: with any stable encoding, gradient descent discovers modular arithmetic and compiles it into a fixed circuit; with non-stationary encodings, it does not.

\paragraph{Falsification and grokking.}
The integer-token model also handles \emph{structure falsification}: when rotation-consistent evidence is followed by a contradicting observation, the model's entropy spikes in a single step (tracking the Bayesian posterior collapse from $P(\Rot) \approx 1$ to $P(\Rot) = 0$), with no perseveration.
Without curriculum learning, the model exhibits grokking-style dynamics~\citep{power2022grokking}: 80K steps of plateau (pure bijection elimination) followed by sudden emergence of model selection.
A curriculum annealing $\pi$ from 0.95 to 0.5 over 30K warmup steps accelerates convergence by $\sim$5$\times$.

\paragraph{Implication.}
The rotation experiments demonstrate that a small transformer can perform quantitatively correct Bayesian model selection, but only when the structural hypothesis is \emph{perceptually accessible} through the token encoding.
The stationarity control and header subtask diagnostics sharpen this: the operative factor is not integer identity but stationary semantics, and the failure localizes not to header parsing (which succeeds perfectly) but to the composition of header inversion with modular arithmetic.
This raises the central question of this paper: is the perceptual access condition an inherent limitation of gradient-compiled inference, or can model selection succeed with opaque symbols when the structural hypothesis is inherently label-invariant?



\section{Label-Invariant Model Selection}
\label{sec:variant3}

The perceptual access condition identified in \Cref{sec:rotation-experiments} motivates a new experimental design: can we find a structured class whose defining property is \emph{label-invariant}---detectable without arithmetic over token identities?
If so, model selection should succeed even with opaque symbols.

\subsection{Fixed-Point-Free Involutions}
\label{sec:involutions}

\begin{definition}[Fixed-Point-Free Involution Class]
\label{def:involution}
A \emph{fixed-point-free involution} on $\Zn$ (with $n$ even) is a permutation $f$ satisfying $f(f(x)) = x$ and $f(x) \neq x$ for all $x$.
Equivalently, $f$ is a product of $n/2$ disjoint transpositions---a perfect matching on $\Zn$.
We denote this class:
\begin{equation}
  \mathrm{Inv} = \{ f \in \Sn : f \circ f = \mathrm{id},\; f(x) \neq x \text{ for all } x \}.
\end{equation}
The size of this class is:
\begin{equation}
  |\mathrm{Inv}_n| = (n-1)!! = (n-1)(n-3)(n-5) \cdots 3 \cdot 1.
\end{equation}
For $n = 16$: $|\mathrm{Inv}_{16}| = 15!! = 2{,}027{,}025$, compared to $|\Bij_{16}| = 16! \approx 2 \times 10^{13}$.
\end{definition}

\begin{remark}[Label invariance]
\label{rem:label-invariance}
The involution property $f(f(x)) = x$ is purely combinatorial: it depends only on the \emph{graph structure} of $f$ (every connected component is a 2-cycle), not on any ordering or arithmetic over the symbol set.
If $\sigma$ is any relabeling (bijection on $\Zn$), then $f$ is an involution if and only if $\sigma \circ f \circ \sigma^{-1}$ is an involution.
This means the involution-vs-bijection wind tunnel can use opaque symbols with no header encoding, directly testing whether model selection depends on perceptual access to algebraic structure.
\end{remark}

\subsection{Posterior Computation}
\label{sec:inv-posterior}

The posterior computation requires tracking the set of constraints imposed by observations on the space of consistent involutions.

\begin{definition}[Observation State]
\label{def:obs-state}
After $k$ observations $\D_k = \{(x_1, y_1), \ldots, (x_k, y_k)\}$ with distinct inputs, define:
\begin{itemize}[itemsep=2pt]
  \item \textbf{Confirmed 2-cycles} $C$: pairs $\{x_i, y_i\}$ where both $(x_i, y_i)$ and $(y_i, x_i)$ have been observed---i.e., $y_i$ also appeared as an input with output $x_i$.
  \item \textbf{Pending constraints} $P$: pairs $(x_i, y_i)$ where $y_i$ has not yet appeared as an input. These constrain $f(y_i) = x_i$ for any consistent involution.
  \item \textbf{Free elements} $F = \Zn \setminus (\text{all elements appearing in } C \text{ or } P)$: elements with no constraints.
\end{itemize}
Let $c = |C|$ (number of confirmed 2-cycles), $p = |P|$ (number of pending constraints), and $r = |F|$ (number of free elements).
Note that $2c + 2p + r = n$.
\end{definition}

\begin{proposition}[Involution Likelihood]
\label{prop:inv-likelihood}
If $\D_k$ is consistent with an involution (no contradictions detected), the number of involutions consistent with $\D_k$ is:
\begin{equation}
\label{eq:inv-consistent}
  N_{\mathrm{Inv}}(\D_k) = (r - 1)!!
\end{equation}
where $r = n - 2c - 2p$ is the number of free elements.
The marginal likelihood under the involution class is:
\begin{equation}
  P(\D_k \mid \mathrm{Inv}) = \frac{(r-1)!!}{(n-1)!!}.
\end{equation}
\end{proposition}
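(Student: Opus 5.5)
The plan is to count consistent involutions by identifying exactly which transpositions each one must contain, and then to divide by $|\mathrm{Inv}_n|$ from \Cref{def:involution}. The argument is a bijection: the involutions consistent with $\D_k$ should correspond to perfect matchings on the free set $F$.

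\textbf{Step 1 (forced transpositions).} An involution $f$ that agrees with $\D_k$ is determined on observed inputs, since $f(x_i)=y_i$. Because $f$ is an involution, $f(y_i)=x_i$ is forced as well. So for every $(x_i,y_i)\in P$, and for every confirmed pair in $C$, the transposition $(x_i\,y_i)$ must appear in the cycle decomposition of $f$.

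\textbf{Step 2 (well-definedness).} I will use the hypothesis ``no contradictions detected'' to show these forced transpositions are pairwise disjoint and fixed-point-free. Concretely, consistency means:
\begin{itemize}
\item $y_i\neq x_i$;
\item outputs are distinct;
\item if $y_i$ appears as an input then its observed output is $x_i$.
\end{itemize}
From these it follows that the sets $\{x_i,y_i\}$ for distinct pairs in $C\cup P$ are disjoint. Each confirmed 2-cycle is counted once in $C$ even though it was observed twice. The forced transpositions therefore cover exactly $2c+2p$ elements, which gives $r=n-2c-2p$. I expect this bookkeeping to be the main obstacle. In particular I must check that an element cannot appear as the output of one pending constraint and as the input or output of another pair. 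Distinct outputs together with the consistency condition rule this out: if $y_i=x_j$, then consistency forces $y_j=x_i$, so the pair is confirmed rather than pending. I must also note that $r$ is even, since $n$ is even.

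\textbf{Step 3 (completion).} Given the forced transpositions, $f$ restricted to $F$ must map $F$ to itself. The set $F$ is the complement of a union of $f$-invariant sets, so it is $f$-invariant too. On $F$, $f$ is a fixed-point-free involution. Conversely, any perfect matching on $F$ combined with the forced transpositions gives a fixed-point-free involution consistent with $\D_k$. This bijection yields $N_{\mathrm{Inv}}(\D_k)=|\mathrm{Inv}_r|=(r-1)!!$, with the convention $(-1)!!=1$ when $r=0$. The standard count $(r-1)!!$ comes from matching the smallest free element with one of $r-1$ partners and recursing.

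\textbf{Step 4 (likelihood).} Under the generative process, $f$ is uniform on $\mathrm{Inv}$ and the inputs are chosen independently of $f$. As in \Cref{prop:class-posterior}, the likelihood of the observed outputs given the inputs is the fraction of the class agreeing with $\D_k$, namely $(r-1)!!/(n-1)!!$.
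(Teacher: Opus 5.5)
Your proposal is correct and follows essentially the paper's route. The confirmed and pending pairs force disjoint transpositions, the $r$ free elements must be perfectly matched in $(r-1)!!$ ways, and dividing by $|\mathrm{Inv}_n|=(n-1)!!$ gives the likelihood. Your disjointness and $F$-invariance checks make rigorous what the paper's three-line proof only asserts.

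One small fix in Step 4: in the paper's involution protocol the reciprocal inputs are earlier outputs, so inputs are not independent of $f$. They depend on $f$ only through already-observed pairs, however, so the input-selection probabilities are the same under every class. The fraction $(r-1)!!/(n-1)!!$ therefore remains the correct likelihood, up to a class-independent factor.
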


\begin{proof}
Each confirmed 2-cycle is fully determined.
Each pending constraint fixes one additional mapping ($f(y_i) = x_i$), which pairs $y_i$ with $x_i$---another determined 2-cycle once confirmed.
The remaining $r$ free elements must form $r/2$ transpositions, which can be done in $(r-1)!!$ ways (choose a partner for the first element in $r-1$ ways, then for the next unpaired element in $r-3$ ways, etc.).
\end{proof}

\begin{remark}
One might object that the model is ``just counting constraints.''
But this is precisely what Bayesian model selection reduces to: computing the marginal likelihood ratio, which requires tracking the number of consistent hypotheses under each class as evidence accumulates.
The non-trivial aspect is that the model must maintain this count \emph{implicitly} through its predictive distribution, dynamically updating as each observation constrains the involution class differently from the bijection class.
\end{remark}

\begin{proposition}[Involution Class Posterior]
\label{prop:inv-class-posterior}
Let $\mathbb{1}_{\mathrm{Inv}}(\D_k) = 1$ if $\D_k$ is consistent with an involution (no fixed points observed, no contradictions with the reciprocal property) and 0 otherwise.
Then:
\begin{equation}
\label{eq:inv-class-posterior}
  P(\mathrm{Inv} \mid \D_k) = \frac{\pi \cdot \frac{(r-1)!!}{(n-1)!!} \cdot \mathbb{1}_{\mathrm{Inv}}(\D_k)}{\pi \cdot \frac{(r-1)!!}{(n-1)!!} \cdot \mathbb{1}_{\mathrm{Inv}}(\D_k) + (1-\pi) \cdot \frac{(n-k)!}{n!}}.
\end{equation}
\end{proposition}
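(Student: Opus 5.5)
The plan is to follow the template of the proof of \Cref{prop:class-posterior}: compute the marginal likelihood of $\D_k$ under each class and then apply Bayes' rule with prior $\pi$ on $\mathrm{Inv}$ and $1-\pi$ on $\Bij$. The bijection side can be taken over unchanged. By \eqref{eq:lik-bij}, $P(\D_k \mid \Bij) = (n-k)!/n!$, because that count uses only the fact that the $k$ inputs are distinct and the $k$ outputs are distinct. This already holds for any dataset generated by a bijection, so it holds in particular for involution-generated data.

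For the involution side there are two cases.
\begin{itemize}
\item \textbf{Inconsistent data} ($\mathbb{1}_{\mathrm{Inv}}(\D_k)=0$). Then no fixed-point-free involution agrees with $\D_k$, so the likelihood is $0$ and the formula gives posterior $0$. We should check that the stated inconsistency conditions are exhaustive: a fixed point $y_i = x_i$; a reciprocal violation, meaning $y_i = x_j$ but $y_j \neq x_i$; or a collision with a pending constraint. All of these amount to the observed partial map failing to extend to a perfect matching.
\item \textbf{Consistent data.} Here we invoke \Cref{prop:inv-likelihood}. Because $f$ is uniform on $\mathrm{Inv}$, $P(\D_k \mid \mathrm{Inv})$ equals the number of consistent involutions divided by $|\mathrm{Inv}_n| = (n-1)!!$, which gives $(r-1)!!/(n-1)!!$.
\end{itemize}
Multiplying each likelihood by its prior and normalizing yields \eqref{eq:inv-class-posterior}. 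The indicator appears in both the numerator and the denominator so that the two cases are handled by one expression.

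The main thing to verify is the counting in \Cref{prop:inv-likelihood} under the consistency hypothesis, on which the whole result rests. We need that the observed pairs, closed under symmetrization, form a partial matching in which every element is either in a confirmed 2-cycle, in a pending pair, or free. Each pending pair $(x_i,y_i)$ then forces the transposition $(x_i\,y_i)$. We would first check that distinct pending pairs and confirmed cycles are disjoint as sets; this follows from distinct inputs and outputs together with consistency. That disjointness justifies $2c+2p+r=n$, so $r$ is even and $(r-1)!!$ is well defined, with the convention $(-1)!!=1$ when $r=0$. After that, counting the extensions is the standard count of perfect matchings on $r$ points.

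A further subtlety is that $k$ equals $2c+p$ and not $c+p$, because each confirmed cycle uses two observations. The bijection likelihood must therefore use the raw count $k$, while the involution likelihood uses $r$. The formula as stated already respects this distinction.
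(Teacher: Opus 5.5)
Your proposal is correct and follows the paper's route. The paper gives no separate proof of this proposition: it obtains the formula by substituting the involution likelihood from \Cref{prop:inv-likelihood} and the bijection likelihood \eqref{eq:lik-bij} into Bayes' rule, exactly as in \Cref{prop:class-posterior}, and your additional checks (the inconsistency conditions being exhaustive, disjointness giving $2c+2p+r=n$, and $k=2c+p$) correctly fill in details the paper leaves implicit.
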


\begin{remark}[Predictive distribution at arbitrary positions]
\label{rem:inv-predictive}
The full Bayesian predictive for $f(x_{k+1})$ at an arbitrary position depends on whether $x_{k+1}$ is a previous output (reciprocal) or a fresh element.
At a \textbf{reciprocal} position where $x_{k+1} = y_i$ for some observed pair, the involution class deterministically predicts $f(x_{k+1}) = x_i$, while the bijection class assigns $1/(n-k)$ to each unused output, yielding a clean mixture analogous to \Cref{prop:predictive}.
At a \textbf{fresh} position where $x_{k+1}$ has not appeared as any previous output, the involution class assigns $1/(r-1)$ to each of the $r-1$ free elements that could be paired with $x_{k+1}$, while the bijection class assigns $1/(n-k)$ to all unused outputs.
The resulting predictive has a more complex atom structure (the sets of free elements and unused outputs partially overlap), and the mixture inversion to extract a class posterior is not as clean.
We therefore report class posteriors only at reciprocal positions, where the extraction is exact.
\end{remark}

\subsection{Two Regimes of Evidence}
\label{sec:inv-evidence}

The involution BWT features two qualitatively different types of evidence:

\paragraph{New-element observations.}
When input $x_j$ is a ``fresh'' element (not a previous output), observing $f(x_j) = y_j$ provides weak evidence for the involution class.
The per-observation Bayes factor is:
\begin{equation}
  \frac{P(y_j \mid \mathrm{Inv}, \D_{j-1})}{P(y_j \mid \Bij, \D_{j-1})} = \frac{1/(r-1)}{1/(n-j+1)} = \frac{n - j + 1}{r - 1}
\end{equation}
which is modest (close to 1 when $r \approx n - j + 1$).

\paragraph{Reciprocal observations.}
When input $x_j$ equals a previous output $y_i$, the involution \emph{deterministically} predicts $f(x_j) = x_i$ (since $f(f(x_i)) = x_i$).
Under the bijection class, $f(x_j)$ is uniform over $n - j + 1$ unused outputs.
The per-observation Bayes factor is:
\begin{equation}
\label{eq:reciprocal-bf}
  \frac{P(f(x_j) = x_i \mid \mathrm{Inv}, \D_{j-1})}{P(f(x_j) = x_i \mid \Bij, \D_{j-1})} = n - j + 1.
\end{equation}
For $n = 16$ and $j = 5$, this gives a single-observation Bayes factor of 12---comparable to the rotation evidence from 2 observations.

\begin{example}[$n = 16$, mixed observation types]
\label{ex:inv-n16}
Consider a sequence where inputs are ordered so that some reciprocal tests occur:

\medskip
\begin{center}
\begin{tabular}{cclcc}
\toprule
$k$ & Input & Type & Cumulative $\BF$ & $P(\mathrm{Inv} \mid \D_k)$ \\
\midrule
1 & fresh & New element & 1.07 & 0.52 \\
2 & fresh & New element & 1.23 & 0.55 \\
3 & $y_1$ & Reciprocal ($\to x_1$) & 17.2 & 0.95 \\
4 & fresh & New element & 20.3 & 0.95 \\
5 & $y_2$ & Reciprocal ($\to x_2$) & 244 & 0.996 \\
\bottomrule
\end{tabular}
\end{center}

\medskip
\noindent
The reciprocal observations at $k = 3$ and $k = 5$ provide the dominant evidence, each contributing a Bayes factor of $\sim$10--13$\times$.
The new-element observations provide weak corroborating evidence.
\end{example}

\subsection{Falsification in the Involution BWT}
\label{sec:inv-falsification}

The involution hypothesis can be falsified in two ways:
\begin{enumerate}[itemsep=2pt]
  \item \textbf{Fixed point}: observing $f(x) = x$ for any $x$.
  \item \textbf{Reciprocal violation}: observing $f(y_i) \neq x_i$ where $(x_i, y_i)$ was previously observed.
\end{enumerate}
In both cases, the posterior collapses to $P(\mathrm{Inv} \mid \D_k) = 0$ in a single step, exactly as in rotation falsification.

\subsection{Why Label Invariance Matters}
\label{sec:inv-significance}

The involution BWT is designed to isolate model selection from token-space arithmetic:
\begin{itemize}[itemsep=2pt]
  \item Detecting rotations requires computing $y - x \bmod n$, which depends on the cyclic ordering of tokens.
  \item Detecting involutions requires checking whether $f(y_i) = x_i$---a purely relational test that is invariant under any relabeling of the symbol set.
\end{itemize}
If the transformer performs quantitatively correct Bayesian model selection in the involution BWT \emph{with opaque symbols}, this would demonstrate that the ``scientist'' capability is not an artifact of arithmetic accessibility but a genuine capacity for combinatorial structure detection.

\subsection{Experiments}
\label{sec:inv-experiments}

We train the same 2.8M-parameter architecture on the involution-vs-bijection task with $n = 16$ and $\pi = 0.5$.
Input ordering interleaves fresh inputs with reciprocal tests: after every 2 fresh inputs $(x_i, f(x_i))$, we present a previous output $y_j$ as input, testing whether $f(y_j) = x_j$.
This interleaving is part of the generative distribution, not a form of synthetic supervision: the Bayesian optimal predictor is computed under the same input ordering, and the model receives no signal beyond the standard cross-entropy loss at each output position.
The ordering simply ensures that the training distribution includes the reciprocal positions where model selection produces a measurable signal---without them, neither the Bayesian agent nor the transformer would have evidence to distinguish involutions from bijections.
The implicit class posterior is extracted only at reciprocal positions, where the involution hypothesis makes a deterministic prediction and the mixture inversion is clean.
All reported MAE values are averaged over positions and sequences (1000 evaluation sequences, all reciprocal positions per sequence).

\paragraph{Integer tokens.}
With integer tokens and 300K training steps (cosine schedule), the model achieves:
\begin{itemize}[itemsep=2pt]
  \item Entropy MAE: \textbf{0.011 bits} (vs.\ 0.12 for rotation model selection)
  \item Class posterior MAE: \textbf{0.004} (vs.\ 0.04 for rotations)
\end{itemize}
This is near-perfect Bayesian model selection---an order of magnitude more accurate than the rotation variant.

\paragraph{Smooth learning dynamics.}
Unlike the rotation BWT, which exhibited 80K steps of plateau followed by sudden grokking, the involution model learns smoothly from the start: MAE is 0.058 at step 10K and steadily improves to 0.011 by step 230K.
This qualitative difference in learning dynamics likely reflects the nature of the evidence.
Detecting rotations requires learning modular arithmetic ($y - x \equiv c \bmod n$), a global algebraic relationship that may require circuit consolidation before it ``clicks.''
Detecting involutions requires recognizing that a current input matches a previous output---a local relational pattern naturally suited to attention-based matching.

\begin{figure}[t]
\centering
\includegraphics[width=\columnwidth]{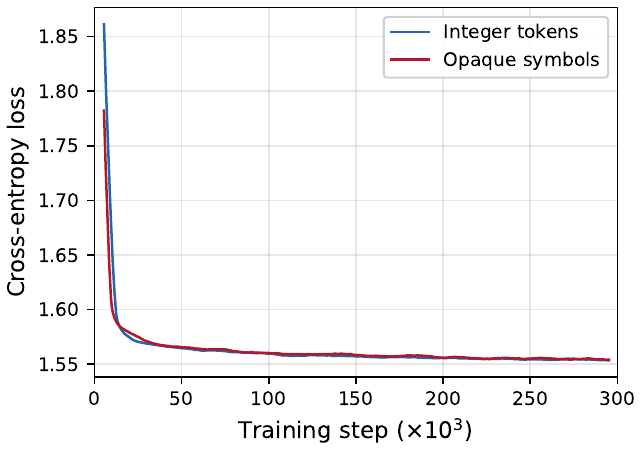}
\caption{Training loss curves for the involution BWT ($n = 16$, 300K steps).
Both integer-token and opaque-symbol models converge smoothly, with no grokking dynamics.
The opaque model converges slightly faster in early training due to the smaller effective vocabulary.}
\label{fig:training-loss}
\end{figure}

\paragraph{Opaque symbols.}
With opaque symbol encoding---where tokens are randomly relabeled per sequence with no header conveying structure---the model achieves MAE $= 0.009$ bits and class posterior MAE $= 0.003$ after 300K training steps.
This is the central result of the paper.
Recall that rotation model selection \emph{completely fails} with opaque symbols (MAE $= 1.28$ bits), because detecting $y = x + c \bmod n$ requires arithmetic over token identities.
Involution model selection succeeds because the defining property $f(f(x)) = x$ is purely relational: the model need only check whether a current input matches a previous output, a pattern that is invariant under any relabeling of the symbol set.
This demonstrates that Bayesian model selection is not an artifact of arithmetic accessibility---it is a genuine capacity for combinatorial structure detection that operates through attention-based relational matching.

\begin{figure}[t]
\centering
\includegraphics[width=\columnwidth]{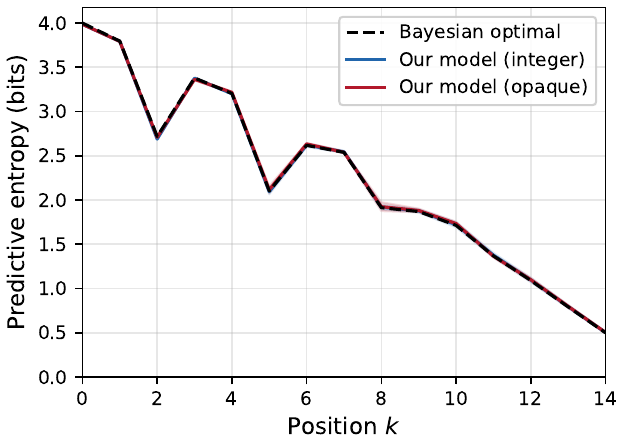}
\caption{Per-position predictive entropy for the involution BWT ($n = 16$, $\pi = 0.5$, mean $\pm$ std over 3 seeds).
Both integer-token and opaque-symbol models track the Bayesian optimal entropy (dashed) to within 0.01 bits across all 15 positions.
The characteristic sawtooth pattern reflects alternating fresh inputs (higher entropy) and reciprocal tests (lower entropy as the involution class concentrates probability on the deterministic prediction).}
\label{fig:entropy-tracking}
\end{figure}

\begin{figure}[t]
\centering
\includegraphics[width=\columnwidth]{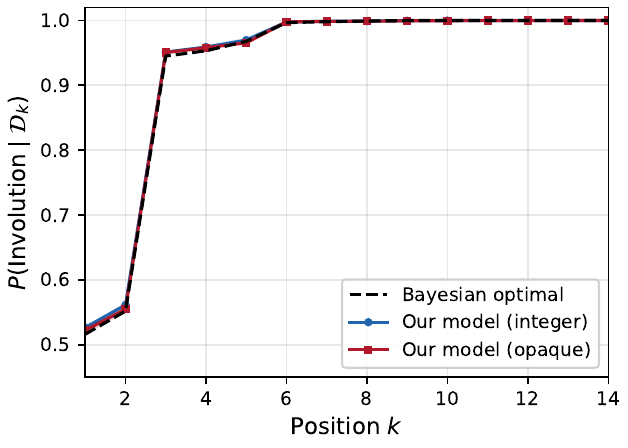}
\caption{Implicit class posterior $P(\mathrm{Inv} \mid \mathcal{D}_k)$ over positions (mean $\pm$ std over 3 seeds).
Both models implement calibrated Occam's razor: the posterior rises from $\approx 0.5$ at $k = 1$ toward 1.0 as reciprocal confirmations accumulate, tracking the Bayesian optimal posterior (dashed) with MAE $< 0.004$.}
\label{fig:class-posterior}
\end{figure}

\begin{figure}[t]
\centering
\includegraphics[width=\columnwidth]{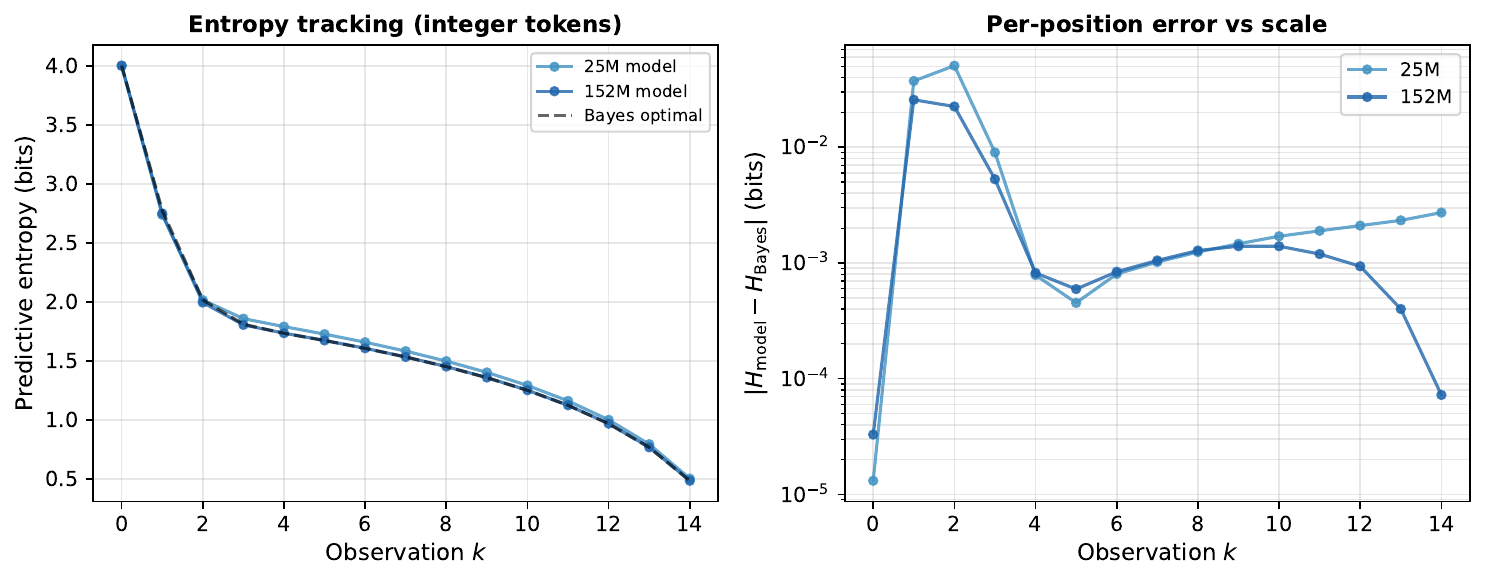}
\caption{Per-position entropy tracking for the rotation model-selection task with integer tokens at two scales.
\emph{Left}: both the 25M and 152M models overlay the Bayesian optimal entropy (dashed).
\emph{Right}: per-position absolute error.
The 152M model achieves errors below $10^{-4}$ bits at late positions, approaching the limits of floating-point arithmetic.}
\label{fig:scaling-precision}
\end{figure}

\section{Non-Nested Model Selection: Involutions vs.\ 3-Cycles}
\label{sec:nonnested}

All experiments so far compare a structured class to the full bijection class, a \emph{nested} comparison where the structured class is a strict subset of bijections.
In nested comparisons, model selection is dominated by a subset-elimination dynamic: the structured class is preferred unless the evidence rules it out, and the complexity penalty (the Occam factor) always favors the smaller class.
A stronger test asks whether transformers can perform model selection between two classes where \emph{neither is a subset of the other}, requiring genuine discrimination rather than simplicity or subset bias.
(The two classes are not equal in size---$|\mathrm{Inv}_{12}| = 10{,}395$ vs.\ $|\mathrm{Tri}_{12}| = 246{,}400$---so an Occam factor still operates; what the non-nested test removes is the subset-elimination shortcut available in every prior experiment.)

\subsection{3-Cycle Permutations}
\label{sec:three-cycles}

\begin{definition}[3-Cycle Class]
\label{def:three-cycle}
A \emph{fixed-point-free 3-cycle permutation} on $\Zn$ (with $n$ divisible by 3) is a permutation $f$ that is a product of $n/3$ disjoint 3-cycles.
Each element lies in exactly one 3-cycle $(a \to b \to c \to a)$, so $f(a) = b$, $f(b) = c$, $f(c) = a$.
We denote this class:
\begin{equation}
  \mathrm{Tri} = \{ f \in \Sn : f \text{ is a product of } n/3 \text{ disjoint 3-cycles} \}.
\end{equation}
The size of this class is:
\begin{equation}
\label{eq:tri-count}
  |\mathrm{Tri}_n| = \frac{n!}{3^{n/3} \cdot (n/3)!}.
\end{equation}
To derive this: partition $n$ elements into $n/3$ unordered triples in $n!/((3!)^{n/3} \cdot (n/3)!)$ ways, then orient each triple as one of $2$ non-identity cyclic permutations, giving:
\begin{equation}
  |\mathrm{Tri}_n| = \frac{n! \cdot 2^{n/3}}{(3!)^{n/3} \cdot (n/3)!} = \frac{n!}{3^{n/3} \cdot (n/3)!}.
\end{equation}
For $n = 12$: $|\mathrm{Tri}_{12}| = 12!/(3^4 \cdot 4!) = 246{,}400$.
\end{definition}

\begin{remark}[Non-nestedness]
\label{rem:nonnested}
$\mathrm{Inv}$ and $\mathrm{Tri}$ are disjoint: every involution consists entirely of 2-cycles, while every 3-cycle permutation consists entirely of 3-cycles.
No permutation belongs to both classes.
This is a qualitatively different test from all prior experiments, where the structured class was always a subset of $\Bij$.
\end{remark}

\begin{remark}[Label invariance of both classes]
\label{rem:tri-label-invariance}
Like the involution property, the 3-cycle property is purely combinatorial: if $\sigma$ is any relabeling, then $f$ is a product of disjoint 3-cycles if and only if $\sigma \circ f \circ \sigma^{-1}$ is.
Both classes are label-invariant, so the non-nested BWT can use opaque symbols for \emph{both} hypothesis classes.
\end{remark}

\subsection{One-Shot Discrimination}
\label{sec:nonnested-discrimination}

The key diagnostic signal arises at \emph{reciprocal positions}.
Given an earlier observation $(x_i, y_i)$, we present $y_i$ as input:
\begin{itemize}[itemsep=2pt]
  \item Under $\mathrm{Inv}$: $f(y_i) = x_i$ deterministically (since $f(f(x_i)) = x_i$).
  \item Under $\mathrm{Tri}$: $f(y_i) \neq x_i$.
    Instead, if $(x_i, y_i, z_i)$ is the 3-cycle, then $f(y_i) = z_i$---a \emph{third} element, distinct from both $x_i$ and $y_i$.
\end{itemize}
A single reciprocal observation therefore provides \emph{one-shot} class discrimination: if $f(y_i) = x_i$, the 3-cycle class is falsified at that position; if $f(y_i) \neq x_i$ (and $f(y_i) \neq y_i$), the involution class is falsified.
This clean separation at reciprocal positions is what makes the non-nested comparison tractable.

\subsection{Experiments}
\label{sec:nonnested-experiments}

We train the same 2.8M-parameter architecture on the involution-vs-3-cycle task with $n = 12$ (the smallest value divisible by both 2 and 3 that is large enough for meaningful model selection) and $\pi = 0.5$.
Input ordering interleaves fresh inputs with reciprocal tests, as in the involution BWT (\Cref{sec:inv-experiments}).
Each model is trained for 300K steps with cosine schedule, and we report results averaged over 3 seeds (42, 137, 2024).

\begin{table}[h]
\centering
\caption{Non-nested model selection: involutions vs.\ 3-cycles ($n = 12$, $\pi = 0.5$, mean $\pm$ std over 3 seeds).
Both integer-token and opaque-symbol models achieve near-exact Bayesian model selection, confirming that the perceptual access condition depends on the \emph{structure} being detected, not on whether classes are nested.}
\label{tab:nonnested}
\begin{tabular}{lcc}
\toprule
Token encoding & Entropy MAE (bits) & Class Post.\ MAE \\
\midrule
Integer tokens & $0.077 \pm 0.001$ & $0.001$ \\
Opaque symbols & $0.076 \pm 0.002$ & $0.001$ \\
\bottomrule
\end{tabular}
\end{table}

\paragraph{Results.}
\Cref{tab:nonnested} shows the results and \Cref{fig:nonnested-entropy,fig:nonnested-posterior} display the per-position tracking.
Both configurations achieve near-exact Bayesian model selection: entropy MAE of 0.077 bits (integer) and 0.076 bits (opaque), with class posterior MAE under 0.001 in both cases.
The model correctly identifies which class generated the data and adjusts its predictions accordingly, with performance essentially independent of token encoding.

\begin{figure}[t]
\centering
\includegraphics[width=\columnwidth]{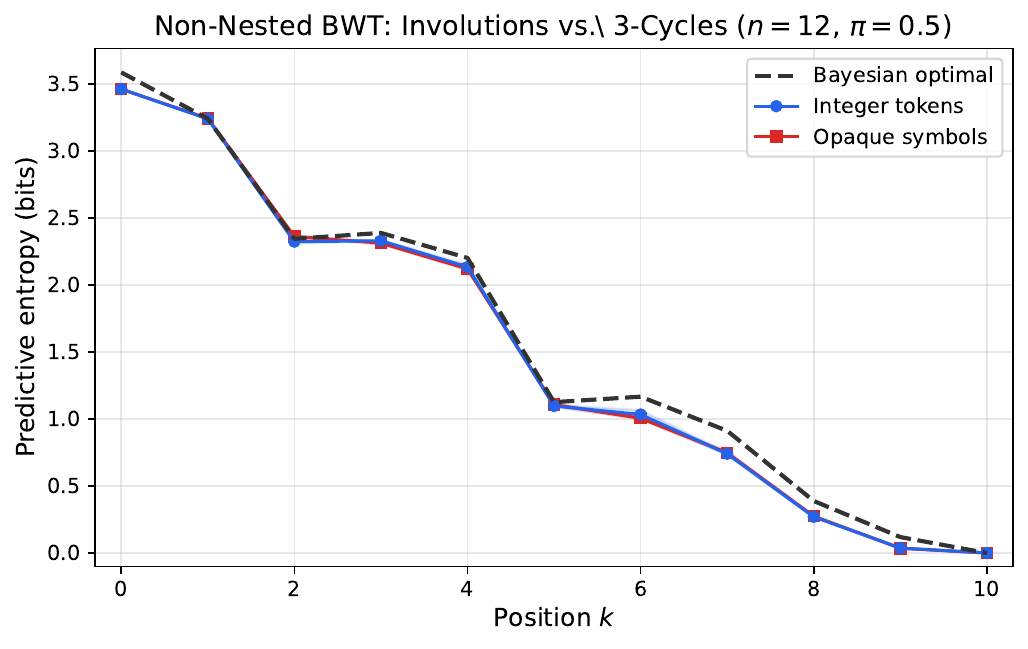}
\caption{Per-position predictive entropy for the non-nested BWT (involutions vs.\ 3-cycles, $n = 12$, $\pi = 0.5$, mean $\pm$ std over 3 seeds).
Both integer-token and opaque-symbol models track the Bayesian optimal entropy (dashed) across all positions.
The higher baseline entropy compared to the nested involution BWT (\Cref{fig:entropy-tracking}) reflects the harder prediction problem when both competing classes make structured predictions.}
\label{fig:nonnested-entropy}
\end{figure}

\begin{figure}[t]
\centering
\includegraphics[width=\columnwidth]{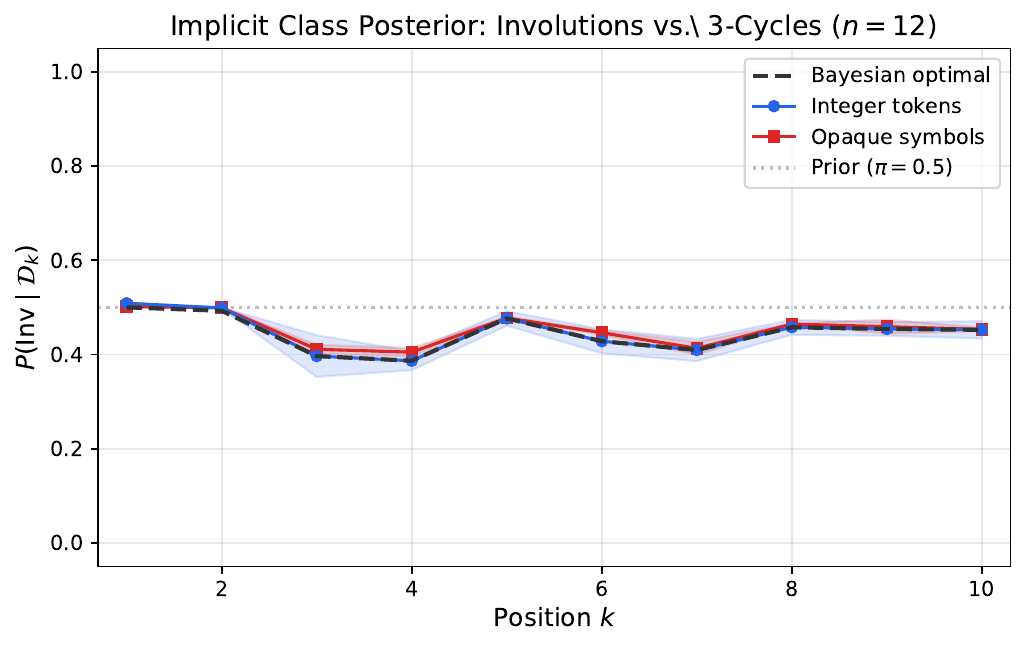}
\caption{Implicit class posterior $P(\mathrm{Inv} \mid \mathcal{D}_k)$ at reciprocal positions for the non-nested BWT (mean $\pm$ std over 3 seeds).
Both models track the Bayesian optimal posterior (dashed) with MAE $< 0.001$, exploiting the one-shot discrimination property: a single reciprocal observation deterministically distinguishes involutions from 3-cycles.}
\label{fig:nonnested-posterior}
\end{figure}

\paragraph{Comparison with nested experiments.}
The non-nested results complement the nested involution-vs-bijection results (\Cref{sec:inv-experiments}) in two ways.
First, the higher entropy MAE (0.077 vs.\ 0.011 bits) reflects the harder prediction problem: in the nested setting, the bijection class assigns $1/(n-k)$ probability to each unused output, while in the non-nested setting, both classes make structured predictions at reciprocal positions, requiring the model to maintain two competing structural hypotheses simultaneously.
Second, class posterior accuracy is \emph{better} in the non-nested case (MAE $= 0.001$ vs.\ 0.004), reflecting the one-shot discrimination property: a single reciprocal observation unambiguously distinguishes the two classes, whereas in the nested case, reciprocal evidence is strong but not deterministically discriminating.

\paragraph{Perceptual access confirmed.}
The key finding is that opaque symbols perform identically to integer tokens on the non-nested task.
This is expected---both involutions and 3-cycles are label-invariant---but it completes a clean $2 \times 2$ dissociation:

\begin{center}
\begin{tabular}{lcc}
\toprule
& Integer tokens & Opaque symbols \\
\midrule
Arithmetic: addition (rotations) & 0.12 bits & 1.28 bits (fails) \\
Arithmetic: multiplication ($f(x) = cx \bmod 17$) & 0.03 bits & 1.17 bits (fails) \\
Arithmetic: polynomial ($d = 1$--$5$, vs.\ bijection) & 1.05--1.42 bits (fails) & --- \\
Relational structure (involutions) & 0.011 bits & 0.009 bits \\
Relational (non-nested: inv.\ vs.\ 3-cyc.) & 0.077 bits & 0.076 bits \\
\bottomrule
\end{tabular}
\end{center}

\noindent
The perceptual access condition is sharp: opaque symbols destroy performance when the structure requires arithmetic---whether modular addition (rotations) or modular multiplication ($f(x) = cx \bmod p$)---but have no effect when the structure is purely relational (involutions, 3-cycles), regardless of whether the comparison is nested or non-nested.

\subsection{Probing Pretrained LLMs}
\label{sec:llm-probing}

Since the involution BWT uses integer tokens, we can probe frontier LLMs directly: present input-output pairs as text and extract the model's predictive distribution over the next output.
Each prompt states ``A function f maps integers to integers.\ The domain is \{0, 1, \ldots, 15\}.\ f is a bijection.'' followed by observed pairs (e.g., ``f(3) = 9'', ``f(7) = 2'', \ldots) and a query ``f(9) ='' at a reciprocal position.
For models exposing log-probabilities (OpenAI), we use the top-5 logprobs and distribute remaining probability mass uniformly over unobserved tokens; this is an approximation that may slightly distort the tails of the distribution.
For sampling-only APIs (Claude, Gemini), we draw 20 samples per position and build empirical distributions, which introduces sampling noise.
We evaluate on 100 sequences (50 for Claude due to cost), probing at positions $k \in \{0, 1, \ldots, 7\}$.
Given these methodological limitations, we emphasize qualitative trends over precise numeric rankings.
Bootstrap 95\% confidence intervals on entropy MAE (resampling over sequences) are $\pm 0.08$ bits for logprob-based models and $\pm 0.15$ bits for sampling-based models, confirming that the qualitative gap between frontier LLMs and our purpose-trained model is robust despite measurement noise.

\begin{table}[h]
\centering
\caption{Frontier LLM performance on the involution model-selection BWT ($n = 16$, $\pi = 0.5$).
All models show qualitative Bayesian behavior---entropy decreasing with evidence, class posteriors moving in the correct direction---but purpose-training closes a $55\times$ calibration gap.}
\label{tab:llm-probing}
\begin{tabular}{lcc}
\toprule
Model & Entropy MAE (bits) & Class Post.\ MAE \\
\midrule
Claude Sonnet 4.5 & 0.55 & 0.50 \\
Gemini 2.0 Flash & 0.95 & 0.36 \\
GPT-4o-mini & 1.07 & 0.42 \\
GPT-4.1 & 0.83 & 0.29 \\
GPT-4o & 0.60 & 0.18 \\
\midrule
Our model (integer) & \textbf{0.011} & \textbf{0.004} \\
Our model (opaque) & \textbf{0.009} & \textbf{0.003} \\
\bottomrule
\end{tabular}
\end{table}

\begin{figure}[t]
\centering
\includegraphics[width=\columnwidth]{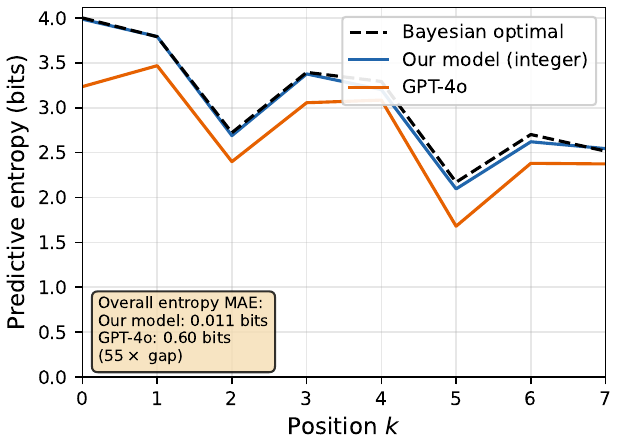}
\caption{Predictive entropy vs.\ position for the involution BWT ($n = 16$, $\pi = 0.5$).
GPT-4o tracks the qualitative shape of the Bayesian curve but with a $55\times$ calibration gap relative to our purpose-trained model, which overlaps the Bayesian optimum (dashed).}
\label{fig:llm-gap}
\end{figure}

Frontier LLMs show the qualitative signatures of Bayesian model selection: their entropy decreases with observations and their implicit class posteriors move in the correct direction.
This is notable given that these models were never trained on involution tasks.
However, they are not calibrated: the best (GPT-4o, 0.60-bit MAE) has a class posterior MAE of 0.18, compared to 0.004 for our purpose-trained model, and some models (e.g., Claude Sonnet 4.5) sit near a class-posterior MAE of 0.50---essentially no calibrated class selection under the $\pi = 0.5$ prior.
We stress that this is not an apples-to-apples comparison: the purpose-trained model is evaluated on its exact training distribution with a full-vocabulary softmax, while the frontier models are probed zero-shot through lossy logprob and sampling estimates, so the gap conflates distribution match with measurement method.
The directional qualitative-to-quantitative gap is nonetheless consistent across model families (OpenAI, Anthropic, Google) and scales---consistent with, though not proof of, an architectural inductive bias toward Bayesian reasoning that purpose-training sharpens.

\subsection{Implicit Knowledge vs.\ Explicit Reasoning}
\label{sec:cot-dissociation}

A natural question is whether frontier LLMs can improve their model-selection performance through chain-of-thought reasoning.
We probe GPT-4o, Claude Sonnet 4.5, and Gemini 2.0 Flash in a \emph{reasoning} mode: each prompt asks the model to ``think step by step'' before answering, and we extract the final integer prediction from the response text.

The results reveal a striking dissociation.
Reasoning-mode accuracy is uniformly poor: GPT-4o achieves 9.0\%, Claude 6.0\%, and Gemini 7.5\%---all near the random baseline of $1/16 = 6.25\%$.
In contrast, GPT-4o's \emph{implicit} next-token logprobs achieve 21.4\% accuracy and place 98.7\% probability on the correct answer at strong reciprocal positions.

This is a comparison between two different readouts of the same model---the verbalized final answer under chain-of-thought versus the implicit next-token distribution---so it does not isolate a causal effect of reasoning on the underlying representation.
What it does show is a dissociation: when prompted to reason explicitly, GPT-4o articulates the bijection constraint correctly (``since $f$ is a bijection, the output must be one of the remaining values\ldots'') but does not identify the involution pattern, ultimately guessing among consistent outputs, whereas its implicit next-token distribution concentrates mass on the reciprocal prediction $f(y_i) = x_i$.

This dissociation between implicit distributional behavior and explicit verbal reasoning is suggestive rather than conclusive: it rests on a single prompt template and the small samples of \Cref{sec:llm-probing}.
Taken at face value, it indicates that the Bayesian signal present in the model's implicit next-token distribution is not surfaced by its verbalized reasoning on this task---a pattern worth probing more systematically in future work.

\section{The Polynomial Barrier: Circuit Depth as a Learnability Boundary}
\label{sec:polynomial}

The perceptual access condition identifies \emph{whether} arithmetic is required.
We now ask a finer question: \emph{how much} arithmetic?
Polynomial detection provides a natural hierarchy of circuit complexity: a degree-$d$ polynomial $f(x) = a_d x^d + \cdots + a_1 x + a_0 \bmod p$ is determined by $d+1$ coefficients, so the first $d+1$ observations are near-uninformative (the Bayes factor stays within a factor $(p-1)/p$ of unity) and every subsequent observation multiplies the Bayes factor by $\sim p$.
Critically, \emph{verifying} that $k > d+1$ observations lie on a degree-$d$ polynomial requires solving a $(d+1) \times (d+1)$ Vandermonde system over $\mathbb{Z}_p$---Lagrange interpolation modulo $p$---a circuit whose depth grows with $d$.

\subsection{Setup}

Fix a prime $p$ and degree $d \geq 1$.
The structured class is:
\begin{equation}
  \mathcal{P}_d = \{ x \mapsto a_d x^d + \cdots + a_0 \bmod p \mid a_0, \ldots, a_d \in \mathbb{Z}_p,\; a_d \neq 0 \},
  \quad |\mathcal{P}_d| = (p-1) \cdot p^d.
\end{equation}
The alternative class is the full set of bijections $\Bij$ on $\mathbb{Z}_p$.
(Note: not all degree-$d$ polynomials are bijections, but the Bayesian calculation conditions on the observed data being consistent with either class.)

\paragraph{Bayesian posterior.}
After $k$ observations $\D_k = \{(x_i, y_i)\}$ with distinct inputs:
\begin{itemize}[itemsep=2pt]
  \item If $k \leq d+1$: the data is consistent with some degree-$d$ polynomial for \emph{any} set of distinct-input, distinct-output pairs (by Lagrange interpolation, $d+1$ points determine a unique polynomial, and any $k < d+1$ points are consistent with $p^{d-k}(p-1)$ polynomials of exactly degree $d$, i.e. with $a_d \neq 0$).
  The Bayes factor is:
  \begin{equation}
    \BF(\mathcal{P}_d : \Bij \mid \D_k) = \frac{p^{d-k}(p-1)}{(p-1)p^d} \cdot \frac{p!}{(p-k)!} = \frac{p!}{p^{k}(p-k)!},
  \end{equation}
  which equals $\prod_{i=0}^{k-1}(p-i)/p$.
  For $k = 1$ this is 1; for $2 \le k \le d+1$ it is slightly below 1 (e.g., $(p-1)/p$ at $k=2$), confirming that the first $d+1$ observations are near-uninformative.

  \item If $k > d+1$: the $d+1$ coefficients are uniquely determined by the first $d+1$ points.
  Each subsequent observation is a deterministic check: $P(y_j \mid \D_{j-1}, \mathcal{P}_d) = 1$ if the point lies on the polynomial, $0$ otherwise.
  Meanwhile, $P(y_j \mid \D_{j-1}, \Bij) = 1/(p-j+1)$.
  Once the data is consistent with a polynomial through $d+1$ points, each additional observation multiplies the Bayes factor by $p - j + 1$, yielding super-exponential growth.
\end{itemize}

\paragraph{Why this probes circuit depth.}
Detecting whether $k > d+1$ points lie on a degree-$d$ polynomial requires recovering the coefficients by solving:
\begin{equation}
  \begin{pmatrix} 1 & x_1 & \cdots & x_1^d \\ \vdots & \vdots & \ddots & \vdots \\ 1 & x_{d+1} & \cdots & x_{d+1}^d \end{pmatrix}
  \begin{pmatrix} a_0 \\ \vdots \\ a_d \end{pmatrix}
  =
  \begin{pmatrix} y_1 \\ \vdots \\ y_{d+1} \end{pmatrix}
  \pmod{p}
\end{equation}
and then checking $a_d x^d + \cdots + a_0 \equiv y_j \pmod{p}$ for $j > d+1$.
For $d = 1$ (affine maps), this reduces to modular subtraction and division---comparable to rotation detection.
For $d \geq 2$, the Vandermonde system requires modular polynomial arithmetic of increasing depth.
This provides a controlled knob on the computational complexity of the discriminative statistic.

\subsection{Experiments}

We test degree $d \in \{1, 2, 3, 4, 5\}$ on $\mathbb{Z}_{17}$ with $\pi = 0.5$, using the same architecture as previous experiments (6 layers, 6 heads, ${\sim}$2.8M parameters).
Each run trains for 150{,}000 steps with 3 random seeds.
We test polynomial-vs-bijection, where the alternative class provides the same bijection elimination bootstrap signal that enables rotation model selection.

\paragraph{Results.}
No degree shows any learning.

\begin{table}[h]
\centering
\caption{Polynomial degree-$d$ vs.\ bijection on $\mathbb{Z}_{17}$ (150K steps, 3 seeds, mean $\pm$ std).
All degrees fail regardless of bijection elimination signal.}
\label{tab:polynomial}
\begin{tabular}{lccc}
\toprule
Degree $d$ & Entropy MAE (bits) & KL (nats) & Class Post.\ MAE \\
\midrule
1 (affine) & $1.11 \pm 0.02$ & $0.76 \pm 0.01$ & $0.87 \pm 0.01$ \\
2 (quadratic) & $1.06 \pm 0.01$ & $0.73 \pm 0.01$ & $0.88 \pm 0.00$ \\
3 (cubic) & $1.05 \pm 0.03$ & $0.72 \pm 0.03$ & $0.88 \pm 0.01$ \\
4 (quartic) & $1.42 \pm 0.04$ & $0.99 \pm 0.03$ & $0.91 \pm 0.00$ \\
5 (quintic) & $1.27 \pm 0.04$ & $0.88 \pm 0.03$ & $0.90 \pm 0.01$ \\
\midrule
Rotation ($d = 0$, addition) & $\mathbf{0.12}$ & --- & $\mathbf{0.04}$ \\
Scalar mult.\ ($cx \bmod 17$) & $\mathbf{0.03}$ & --- & --- \\
\bottomrule
\end{tabular}
\end{table}

Every degree produces entropy MAE above 1~bit and class posterior MAE above 0.85---comparable to the opaque rotation failure ($1.28$~bits).
The bijection alternative provides the same elimination bootstrap that enables rotation model selection ($0.12$~bits) and scalar multiplication ($0.03$~bits), yet the model learns nothing about polynomial structure.

\paragraph{The polynomial learnability boundary.}
The failure is uniform across degrees, including $d = 1$, which is the affine class $f(x) = ax + b \bmod p$.
This is surprising: affine detection requires only one more arithmetic operation than rotation detection ($y = ax + b$ vs.\ $y = x + c$), yet the model goes from $0.12$-bit success to $1.11$-bit failure.

The explanation is that rotation detection has a uniquely shallow circuit.
Given one observation $(x_1, y_1)$, the rotation shift is $c = y_1 - x_1 \bmod n$, and verification at any subsequent point requires only $y_j \stackrel{?}{=} x_j + c \bmod n$---a single modular addition.
Affine detection, by contrast, requires two observations to determine two parameters ($a$ and $b$), involving modular division: $a = (y_2 - y_1)(x_2 - x_1)^{-1} \bmod p$.
Modular inversion is a qualitatively harder circuit; both integer and opaque tokens fail on affine detection at $p = 17$ under our standard (random-input) protocol, consistent with the barrier being computational rather than representational.

\paragraph{Scaling stress test.}
To rule out a capacity explanation, we run degree-1 (affine) polynomial detection on a 25M-parameter model (8 layers, 8 heads, $d_{\mathrm{model}} = 512$, $d_{\mathrm{ff}} = 2048$)---the same architecture that achieves $0.008$-bit MAE on rotation detection.
Across 8 random seeds, the 25M model produces $1.12 \pm 0.02$-bit entropy MAE and $0.87 \pm 0.01$ class posterior MAE---identical to the 2.8M model (Table~\ref{tab:polynomial-scaling}).
A $9\times$ increase in parameters moves the needle by exactly zero.

\begin{table}[h]
\centering
\caption{Scaling comparison: rotation vs.\ affine polynomial detection on $\mathbb{Z}_{17}$.
The 25M model that sharpens rotation detection by $15\times$ shows no improvement on affine detection.}
\label{tab:polynomial-scaling}
\begin{tabular}{llcc}
\toprule
Task & Model & Entropy MAE (bits) & Class Post.\ MAE \\
\midrule
Rotation ($y - x \bmod n$) & 2.8M & 0.12 & 0.04 \\
Rotation ($y - x \bmod n$) & 25M & \textbf{0.008} & --- \\
\midrule
Affine ($y = ax + b \bmod p$) & 2.8M (3 seeds) & $1.11 \pm 0.02$ & $0.87 \pm 0.01$ \\
Affine ($y = ax + b \bmod p$) & 25M (8 seeds) & $1.12 \pm 0.02$ & $0.87 \pm 0.01$ \\
\bottomrule
\end{tabular}
\end{table}

This indicates the polynomial barrier is not a capacity limitation: the same scaling that sharpens rotation detection by $15\times$ (from $0.12$ to $0.008$ bits) does not begin to crack affine detection.
We frame this as a learnability boundary rather than a formal circuit-complexity separation.
We establish it empirically and rule out capacity, but we do not exhibit a circuit-depth lower bound, and the optimization interventions that surfaced rotation model selection (curriculum annealing, extended training) were not applied to the affine case.
Whether the polynomial circuits are unlearnable in principle or merely beyond the reach of the optimizers we tried is left open; connecting the boundary to formal expressivity results for bounded-depth transformers~\citep{merrill2023parallelism,liu2023shortcuts,weiss2021rasp} is a natural next step.

\paragraph{Comparison to the perceptual access dissociation.}
The polynomial result reveals a finer structure within the ``arithmetic requires integer tokens'' side of the perceptual access condition.
Even with integer tokens---where the model has direct access to numerical values---and even at 25M parameters, the Vandermonde circuit for polynomial detection exceeds what gradient descent compiles.
Rotation detection succeeds because it requires a \emph{depth-1} arithmetic circuit (one subtraction).
Everything beyond that---affine maps ($d = 1$), quadratic polynomials ($d = 2$), and higher---fails uniformly.

\begin{center}
\begin{tabular}{lcc}
\toprule
Circuit type & MAE (bits) & Outcome \\
\midrule
Relational (no arithmetic) & 0.009--0.077 & Exact inference \\
Depth-1 arithmetic (rotation, $y - x \bmod n$) & 0.03--0.12 & Exact inference \\
Depth-2+ arithmetic (affine, polynomial) & 1.05--1.42 & Complete failure \\
\quad \textit{including 25M-param affine} & \textit{1.12} & \textit{Complete failure} \\
\bottomrule
\end{tabular}
\end{center}

\noindent
The perceptual access condition thus has two thresholds: (i)~relational vs.\ arithmetic, governing whether opaque symbols succeed, and (ii)~shallow vs.\ deep arithmetic, governing whether even integer tokens suffice.
The boundary of gradient-compiled inference is not simply ``can the model do arithmetic?'' but ``can the model compile the specific arithmetic circuit required for this discriminative statistic?''
This second threshold does not yield to scaling.

\section{Discussion}
\label{sec:discussion}

The model-selection BWT bridges the simulator and scientist views of in-context learning introduced in \Cref{sec:intro}.
Our experiments demonstrate that these are two manifestations of a single Bayesian computation: the same transformer that performs bijection elimination also performs quantitatively correct model selection---both in nested comparisons (involution-vs-bijection, 0.01-bit entropy agreement with the Bayesian optimum, even with opaque symbols) and in non-nested comparisons (involution-vs-3-cycle, 0.077-bit MAE with class posterior MAE under 0.001).

\paragraph{From rotations to involutions.}
The rotation experiments reveal a perceptual access condition: model selection succeeds with integer tokens (MAE $= 0.12$ bits) but fails with opaque symbols (MAE $= 1.28$ bits), because detecting rotations requires modular arithmetic.
The involution experiments resolve this by using a label-invariant structured class.
With integer tokens, the involution model achieves an order-of-magnitude improvement (MAE $= 0.011$ bits, class posterior MAE $= 0.004$), and learns smoothly without the grokking dynamics that characterize rotation model selection.
The qualitative difference---smooth vs.\ grokking---likely reflects the computational demands: involution detection requires only relational matching (``is this input a previous output?''), while rotation detection requires learning a global algebraic relationship.

\paragraph{Multiplication confirms the arithmetic boundary.}
The perceptual access condition is not specific to modular addition.
We test scalar multiplication maps $f(x) = cx \bmod p$ on the multiplicative group $\mathbb{Z}_p^* = \{1, \ldots, p-1\}$, which have the same Bayes factor growth as rotations (both are determined after one observation, with $|\text{Class}| = p - 1$) but require modular multiplication rather than addition.
With integer tokens, the model achieves $0.03$-bit entropy MAE at $p = 17$ and $p = 13$---near-Bayesian optimal.
With opaque symbols, the model fails completely: $1.17$-bit MAE at $p = 17$, indistinguishable from the rotation boundary.
The dissociation is $39\times$, confirming that the perceptual access condition applies to any arithmetic operation, not only to addition.

\paragraph{The polynomial barrier reveals circuit-depth dependence.}
The polynomial experiments (\Cref{sec:polynomial}) reveal a finer structure: even with integer tokens and the same bijection alternative, polynomial degree detection fails uniformly for $d \geq 1$.
This establishes that the boundary of gradient-compiled inference depends not just on \emph{whether} arithmetic is required, but on the \emph{depth} of the required circuit.
Rotation detection ($y - x \bmod n$) is uniquely shallow---a single subtraction---while polynomial detection requires Lagrange interpolation, a qualitatively deeper computation.
The failure of $d = 1$ (affine) is particularly revealing: one additional arithmetic operation beyond rotation detection---modular division to recover the slope---is sufficient to cross the circuit-complexity wall.
Crucially, this boundary does not yield to scaling: a 25M-parameter model that achieves $0.008$-bit rotation MAE produces identical $1.12$-bit affine MAE as the 2.8M model, consistent with a depth-dependent learnability boundary rather than a capacity limitation.

\paragraph{Grokking as circuit consolidation.}
The delayed emergence of rotation model selection---80K steps of plateau followed by a sudden transition---suggests that model selection over arithmetic structures requires a qualitatively different circuit from bijection elimination.
The involution model, by contrast, can build model-selection capability incrementally on top of standard attention-based matching, explaining the smooth learning curve.

\paragraph{Why attention naturally supports involution detection.}
The smooth learning dynamics of the involution model can be understood through the geometry of attention.
A standard attention head computes $\mathrm{softmax}(QK^\top/\sqrt{d})V$: the query--key dot product selects which previous tokens to attend to, and the value projection determines what information to extract.
Involution detection requires exactly this primitive: given a current input $x_j$, the model must check whether $x_j$ appeared as a \emph{previous output} $y_i$, and if so, retrieve the corresponding input $x_i$ to predict $f(x_j) = x_i$.
This is a key-matching-plus-value-retrieval computation that maps directly onto a single attention head---the induction-head primitive characterized by \citet{olsson2022induction}. We note this as an interpretive hypothesis: we do not probe the trained involution model for such a head directly, and mechanistic verification (head ablation, circuit identification) is left to future work.
By contrast, rotation detection requires computing $y - x \bmod n$---a global algebraic relationship that has no natural single-head implementation and instead requires coordinated multi-layer circuits, explaining the grokking dynamics.
This connection to attention geometry also explains why the involution BWT succeeds with opaque symbols: the key-matching operation $x_j \stackrel{?}{=} y_i$ depends only on token identity, not on any ordering or arithmetic over the token space.

\paragraph{Frontier LLMs as qualitative Bayesian model selectors.}
Probing five frontier LLMs on the involution task (\Cref{tab:llm-probing}) reveals that they exhibit the qualitative signatures of Bayesian model selection---entropy decreases with observations, and implicit class posteriors move in the correct direction---on a combinatorial task they were never explicitly trained on.
This is consistent with the thesis that the transformer architecture has an inductive bias toward Bayesian inference~\citep{aggarwal2025bayesian1}.
However, frontier models are not \emph{calibrated}: the best (GPT-4o, 0.60-bit MAE) is $55\times$ worse than our purpose-trained 2.8M-parameter model.
Strikingly, this Bayesian signal resides in the models' implicit next-token distributions rather than in their explicit reasoning: chain-of-thought prompting degrades accuracy to near-random levels, even as the models correctly articulate the bijection constraint.
Purpose-training on the correct data distribution sharpens the architecture's inherent Bayesian capacity from qualitative pattern to exact inference.

\paragraph{The dual-entropy ratio as boundary signature.}
Paper~I introduces a dual-entropy framework: context surprisal $H_I$ (the distinctiveness of context) and prediction entropy $H_P$ (the model's output uncertainty), with $\rho = H_P / H_I$ as a confidence-per-information coefficient~\citep{aggarwal2025bayesian1}.
In the wind tunnel, the Bayesian posterior entropy plays the role of $H_I$, and the entropy MAE we report is $|H_P - H_{\mathrm{Bayes}}|$---the gap between the model's confidence and the confidence warranted by the evidence.

The perceptual access condition maps directly onto this framework.
When the condition is met (integer-token rotations, scalar multiplication, involutions), $\rho$ decreases steadily during training: the model's predictive entropy tracks the Bayesian posterior more precisely with each checkpoint, reaching $0.009$--$0.12$ bits.
When the condition is violated (opaque-token rotations, affine and higher-degree polynomials), $\rho$ remains high throughout training: entropy MAE plateaus at $1.05$--$1.28$ bits from the earliest evaluation onward.

Paper~II provides the mechanistic explanation~\citep{aggarwal2025gradient2}: the advantage signal that drives routing crystallization during gradient descent is large when $H_I$ is high and $H_P$ is low.
Under non-stationary encodings or episode-dependent routing, the advantage signal averages to noise across episodes, $\rho$ fails to decrease, and no circuit compiles.
The dual-entropy ratio thus provides a unified quantitative signature of the learnability boundary: $\rho \to 0$ when and only when the discriminative circuit has fixed wiring across episodes.

\paragraph{Implications.}
These results extend the Bayesian wind tunnel methodology from posterior tracking within a class to posterior tracking \emph{over} classes.
A 2.8M-parameter transformer can perform exact Bayesian model selection---implementing Occam's razor with bit-level precision---when the structural hypothesis is detectable through the model's representational primitives.
The involution result---which succeeds even with opaque symbols---demonstrates that this capability does not depend on arithmetic; it requires only that the structured class be accessible through the model's attention mechanism.
The non-nested result (\Cref{sec:nonnested}) extends this further: model selection is not limited to simplicity-based Occam dynamics (nested classes) but generalizes to genuine discrimination between disjoint hypothesis classes with comparable complexity.
To our knowledge, this provides the first controlled, closed-form model-selection wind tunnel demonstrating that in-context learning implements the full Bayesian inference pipeline: model selection---including non-nested selection---followed by within-model filtering, with calibrated uncertainty at every stage.

\paragraph{Limitations and future work.}
Several directions remain open.
Our nested experiments use $n = 16$ (with the opaque rotation failure confirmed across $n \in \{8, 16, 32, 64\}$ and extended training to $500{,}000$ steps ruling out late grokking) and the non-nested experiments use $n = 12$ (the smallest value divisible by both 2 and 3), both with equal priors ($\pi = 0.5$); testing extreme priors would further probe the limits of model-selection precision.
The designed input ordering (interleaving fresh and reciprocal positions) ensures diagnostic evidence appears at predictable positions; randomizing the interrogation schedule would rule out any positional heuristic and strengthen the generality claim.
We extract class posteriors only at reciprocal positions where the mixture inversion is clean (\Cref{rem:inv-predictive}); extending the extraction to arbitrary positions would provide a denser evaluation signal.
Architecture ablations (removing specific heads or layers, comparing to non-attention baselines) and mechanistic probing (e.g., linear probes for class identity, or probes verifying that the model maintains representations of both hypothesis classes in the non-nested setting rather than simply recognizing 2-cycles) would provide direct evidence for the computational mechanisms underlying model selection.
Investigating whether a light auxiliary loss (e.g., predicting integer identities from opaque tokens using the header) during standard cross-entropy training enables subsequent success on opaque rotations---without architectural changes---would further characterize the boundary between the optimization objective and the architecture.
Finally, the frontier LLM evaluation relies on top-5 logprobs and low-sample Monte Carlo, which can bias entropy estimates; obtaining full $n$-way logprob distributions would yield cleaner calibration comparisons.
We plan to release the wind tunnel code and evaluation harnesses to facilitate independent replication.

\bibliographystyle{ACM-Reference-Format}
\bibliography{references,extra_refs}

\end{document}